\newsavebox{\largestimage}
\newtheorem{thmlem}{Lemma}
\newtheorem{thmthm}{Theorem}
\newtheorem{thmasmp}{Assumption}
\newtheorem*{thmidasmp*}{Identifying assumptions}
\theoremstyle{definition}
\newtheorem{thmex}{Example}
\newtheorem{thmrem}{Remark}
\newenvironment{proofsketch}{\emph{Proof sketch.}}{\hfill$\square$}
\DeclareMathOperator*{\argmax}{arg\,max}
\DeclareMathOperator*{\argmin}{arg\,min}
\def\E{\mathbb{E}}
\def\cH{\mathcal{H}}
\def\cI{\mathcal{I}}
\def\cY{\mathcal{Y}}
\def\cX{\mathcal{X}}
\def\cF{\mathcal{F}}
\def\cG{\mathcal{G}}
\def\cA{\mathcal{A}}
\def\cS{\mathcal{S}}
\def\cU{\mathcal{U}}
\def\cZ{\mathcal{Z}}
\def\olcA{\overline{\mathcal{A}}}
\def\olcY{\overline{\mathcal{Y}}}
\def\olA{\overline{A}}
\def\olY{\overline{Y}}
\def\ola{\overline{a}}
\def\oly{\overline{y}}
\def\ds1{\mathds{1}}
\newcommand\indep{\protect\mathpalette{\protect\independenT}{\perp}}
\def\independenT#1#2{\mathrel{\rlap{$#1#2$}\mkern2mu{#1#2}}}
\def\stop{\textnormal{\small \sc \textbf{stop}}}
\def\cAstop{\cA\cup \{\stop{}\}}
\def\cg{{\sc CG}}
\def\cdp{{\sc CDP}}
\def\ndp{{\sc NDP}}
\title{Learning to search efficiently for causally near-optimal treatments}
\author{%
  Samuel H\aa{}kansson\thanks{This work was completed while the author was affiliated with Chalmers University of Technology.}  \\
  University of Gothenburg\\
  \texttt{samuel.hakansson@gu.se} \\
  \And
  Viktor Lindblom  \\
  Chalmers University of Technology\\
  \texttt{viklindb@student.chalmers.se} \\
  \And
  Omer Gottesman\thanks{This work was completed while the author was affiliated with Harvard University.} \\
  Brown University\\
  \texttt{omer\_gottesman@brown.edu} \\
  \And
  Fredrik D. Johansson  \\
  Chalmers University of Technology\\
  \texttt{fredrik.johansson@chalmers.se} \\
}
\begin{document}

\maketitle

%
% ABSTRACT
%
\begin{abstract}
Finding an effective medical treatment often requires a search by trial and error. Making this search more efficient by minimizing the number of unnecessary trials could lower both costs and patient suffering. We formalize this problem as learning a policy for finding a near-optimal treatment in a minimum number of trials using a causal inference framework. We give a model-based dynamic programming algorithm which learns from observational data while being robust to unmeasured confounding. To reduce time complexity, we suggest a greedy algorithm which bounds the near-optimality constraint. The methods are evaluated on synthetic and real-world healthcare data and compared to model-free reinforcement learning. We find that our methods compare favorably to the model-free baseline while offering a more transparent trade-off between search time and treatment efficacy. 
\end{abstract}

%
% INTRODUCTION
%
\section{Introduction}
Finding a good treatment for a patient often involves trying out different options before a satisfactory one is found~\citep{murphy2007customizing}. If the first-line drug is ineffective or has severe side-effects, guidelines may suggest it is replaced by or combined with another drug~\citep{singh20162015}. These steps are repeated until an effective combination of drugs is found or all options are exhausted, a process which may span several years~\citep{national2010depression}. A long search adds to patient suffering and postpones potential relief. It is therefore critical that this process is made as time-efficient as possible.

We formalize the search for effective treatments as a policy optimization problem in an unknown decision process with finite horizon~\citep{garcia1998learning}. This has applications also outside of medicine: For example, in recommendation systems, we may sequentially propose new products or services to users with the hope of finding one that the user is interested in. Our goal is to perform as few  trials as possible until the probability that there are untried actions which are significantly better is small---i.e., \emph{a near-optimal action has been found with high probability}. Historical observations allow us to transfer knowledge and perform this search more efficiently for new subjects. As more actions are tried and their outcomes observed, our certainty about the lack of better alternatives increases. Importantly, even a failed trial may provide information that can guide the search policy.

In this work, we restrict our attention to actions whose outcomes are stationary in time. This implies both that repeated trials of the same action have the same outcome and that past actions do not causally impact the outcome of future actions. The stationarity assumption is justified, for example, for medical conditions where treatments manage symptoms but do not alter the disease state itself, or where the impact of sequential treatments is known to be additive. In such settings, past actions and outcomes may help predict the outcomes of future actions without having a causal effect on them.

%
% MAIN CONTRIBUTIONS
%
%\paragraph{Main contributions}
We formalize learning to search efficiently for causally effective treatments as off-policy optimization of a policy which finds a near-optimal action for new contexts after as few trials as possible. Our setting differs from those typical of  reinforcement or bandit learning~\citep{sutton1998introduction}:
\begin{enumerate*}[label=(\roman*)]
\item Solving the problem relies on transfer of knowledge from observational data.
\item The stopping (near-optimality) criterion depends on a model of unobserved quantities.
\item The number of trials in a single sequence is bounded by the number of available actions.
\end{enumerate*}
We address identification of an optimal policy using a causal framework, accounting for potential confounding. We give a dynamic programming algorithm which learns policies that satisfy a transparent constraint on near-optimality  for a given level of confidence, and a greedy approximation which satisfies a bound on this constraint. We show that greedy policies are sub-optimal in general, but that there are settings where they return policies with informative guarantees. In experiments, including an application derived from antibiotic resistance tests, our algorithms successfully learn efficient search policies and perform favorably to baselines.

%
% RELATED WORK
%
\section{Related work}
% Policy optimization
Our problem is related to the bandit literature, which studies the search for optimal actions through trial and error~\citep{lattimore2020bandit}, and in particular to contextual bandits~\citep{abe2003reinforcement,chu2011contextual}. In our setting, a very small number of actions is evaluated, with the goal of terminating search as early as possible. This is closely related to the fixed-confidence variant of best-arm identification~\citep[Chapter 33.2]{lattimore2020bandit}, in which only exploration is performed. To solve this problem without trying each action at least once, we rely on transferring knowledge from previous trials. This falls within the scope of transfer and meta learning. \citet{liao2020personalized} explicitly tackled pooling knowledge across patients data to determine an optimal treatment policy in an RL setting and  \citet{maes2012meta} devised methods for meta-learning of exploration policies for contextual bandits. A notable difference is that we assume that outcomes of actions are stationary in time. We leverage this both in model identification and policy optimization.
%sharaf2019meta

% Off-policy evaluation
Experiments continue to be the gold standard for evaluating adaptive treatment strategies~\citep{nahum2012experimental}. However, these are not always feasible due to ethical or practical constraints. We approach our problem as causal estimation from observational data~\citep{rosenbaum2010design,robins2000marginal}, or equivalently, as off-policy policy optimization and evaluation~\citep{precup2000eligibility,kallus2018optimal}.  Unlike many works, we do not fully rely on ignorability---that all confounders are measured and may be adjusted for. \citet{zhang2019near} recently studied the non-ignorable setting but allowed for limited online exploration. In this work we aim to bound the effect of unmeasured confounding rather than to eliminate it using experimental evidence.

% Active learning
Our problem is closely related to active learning~\citep{lewis1994sequential}, which has been used to develop testing policies that minimize the expected number of tests performed before an underlying hypothesis is identified. For a known distribution of hypotheses, finding an optimal  policy is NP-hard~\citep{chakaravarthy2007decision}, but there exists greedy algorithms with approximation guarantees~\citep{golovin2010near}. In our case, (i) the distribution is unknown, and (ii) hypotheses (outcomes) are only partially observed. 
% Optimal stopping
Our problem is also related to optimal stopping~\citep{jacka1991optimal} of processes but differs in that the process is controlled by our decision-making agent.

%
% BACKGROUND
%
\section{Learning to search efficiently for causally near-optimal treatments}%
\label{sec:background}%
We consider learning policies $\pi \in \Pi$ that search over a set of actions $\cA \coloneqq \{1, ..., k\}$ to find an action $a\in \cA$ such that its outcome $Y(a) \in \cY$ is near-optimal. When such an action is found, the search should be terminated as early as possible using a special stop action, denoted $a=\stop{}$. Throughout, a high outcome is assumed to be preferred and we often refer to actions as ``treatments''. The potential outcome $Y(a)$ may vary between subjects (contexts) depending on \emph{baseline covariates} $X \in \cX \subseteq \mathbb{R}^d$ and unobserved factors. When a search starts, all potential outcomes $\{Y(a) : a\in \cA\}$ are unobserved, but are successively revealed as more actions are tried, see the illustration in Figure~\ref{fig:timeline}. To guide selection of the next action, we learn from observational data of previous subjects. 

Historical searches are observed through covariates $X$ and a sequence of $T$ \emph{action}-\emph{outcome} pairs $(A_1, Y_1), ..., (A_{T}, Y_{T})$. Note the distinction between interventional and observational outcomes; $Y_s(a)$ represents the \emph{potential} outcome of performing action $a$ at time $s$~\citep{rubin2005causal}. We assume that $y \in \cY$ are discrete, although our results may be generalized to the continuous case. Sequences of $s$ random variables $(A_1, ..., A_s)$ are denoted with a bar and subscript, $\olA_s \in \olcA_s$, and $H_s = (X, \olA_s, \olY_s) \in \cH_s \coloneqq \cX \times \{\olcA_s \times \olcY_s\}$ denotes the history up-to time $s$, with $H_0 = (X, \emptyset, \emptyset)$.  With slight abuse of notation, $a \in h$ means that $a$ was used in $h$ and $(a,y)\in h$ that it had the outcome $y$. $|h|$ denotes the number of trials in $h$. The set of histories of at most $k$ actions is denoted $\cH = \cup_{s=1}^k \cH_s$. 
Termination of a sequence is indicated by the sequence length $T$ and may either be the result of finding a satisfactory treatment or due to censoring. Hence, the full set of potential outcomes is not observed for most subjects. Observations are distributed according to $p(X, T, \olA_T, \olY_T)$.
%See Figure~\ref{fig:causalmodel} for an illustration of our assumed causal model.

We optimize a deterministic policy $\pi$ which suggests an action $a$ following observed history $h$, starting with $(x, \emptyset, \emptyset)$, or terminates the sequence. Formally, $\pi \in \Pi \subseteq \{\cH \rightarrow \cAstop \}$. Taking the action $\pi(h_s)= \stop{}$ at a time point $s$ implies that $T=s$. 
Let $p_\pi(X, \olA, \olY, T)$ be the distribution in which actions are drawn according to the policy $\pi$.
For a given \emph{slack parameter} $\epsilon \geq 0$ and a \emph{confidence parameter} $\delta \geq 0$, we wish to solve the following problem.
\begin{equation}
\begin{aligned}
& \underset{\pi \in \Pi}{\text{minimize}} 
& & \E_{X, \olY, \olA, T \sim p_\pi}[T] \\
& \text{subject to} 
& & \Pr\left[\max_{a \not \in \olA_t} Y(a) > \max_{(a,y) \in h} y + \epsilon \Bigm\vert H_t=h, T=t \right] \leq \delta, \;\;\; \forall t\in \mathbb{N}, h\in \cH_t
\end{aligned}\label{eq:mainprob}
\end{equation}

In \eqref{eq:mainprob}, the objective equals the expected search length under $\pi$ and the constraint enforces that termination occurs only when there is low probability that a better action will be found among the unused alternatives. Note that if $\max_{y \in \cY} y$ is known and is in  $\olY_s$, the constraint is automatically satisfied at $s$.
To evaluate the constraint, we need a model of unobserved potential outcomes. This is dealt with in Section~\ref{sec:identification}. We address optimization of \eqref{eq:mainprob} for a known model in Section~\ref{sec:optimization}.

%
% IDENTIFICATION & ESTIMATION
%
\section{Causal identification and estimation of optimality conditions}
\label{sec:identification}

\begin{figure}
    \centering
    \includegraphics[width =0.6\textwidth]{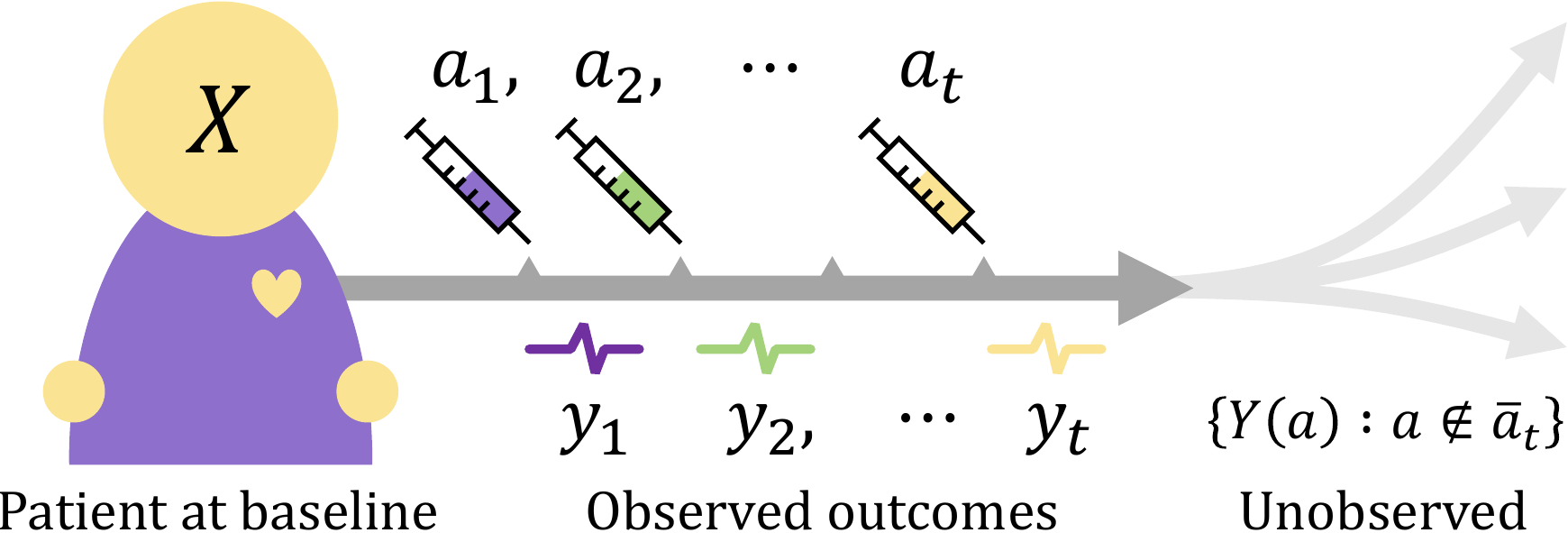}
    \caption{Illustration of the observed sequence of treatments $\ola_t = (a_1, ..., a_t)$ and outcomes  $\oly_t = (y_1, ..., y_t)$ for a patient, and the problem of estimating the outcome of possible future treatments.}
    \label{fig:timeline}
\end{figure}

Our assumed causal model for observed data is illustrated graphically in Figure~\ref{fig:causalmodel}. Most notably, the graph defines the causal structure between actions and outcomes---previous actions $A_1, ..., A_{s-1}$ and outcomes $Y_1, ..., Y_{s-1}$ are assumed to have no direct causal effect on future outcomes $Y_s, ..., Y_T$. To allow for correlations between outcomes, we posit the existence of counfounders $X$ (observed) and $U$ (unobserved) and an unobserved moderator $Z$. All other variables are assumed exogenous.

To evaluate the near-optimality constraint and solve \eqref{eq:mainprob}, we must identify the probability 
\begin{equation}%
\label{eq:nearopt}%
\rho(h) \coloneqq \Pr[\max_{a \not\in h} Y(a) > \max_{(a,y) \in h} y + \epsilon \mid H=h]~,
\end{equation}
with the convention that $\max_{(a,y) \in h_0} y = -\infty$ if $|h_0| = 0$. Henceforth, let $\cA_{-h} = \{a \in \cA  : a \not\in h_s\}$ denote the set of untried actions at $h$ and let $\cS(\cA_{-h})$ be all permutations of the elements in $\cA_{-h}$.

We state assumptions sufficient for identification of $\rho(h)$ below. Throughout this work we assume that consistency, stationarity of outcomes and positivity always hold, and provide identifiability results both when ignorability holds (Section \ref{sec:ignorable}) and when it is violated (Section \ref{sec:nonignorable}). 
\begin{thmidasmp*}
 Define $\olA_{s+1:k} = (A_{s+1}, ..., A_k)$. Under the observational distribution $p$, and evaluation distribution $p_\pi$, for all  $\pi \in \Pi$, $h\in \cH_s$, and $s,r \in \mathbb{N}$, we assume
\textnormal{
\begin{enumerate}
    \item \textbf{Consistency:} $Y_{s} = Y_s(A_s)$
    \label{asmp:consistency}
    \item \textbf{Stationarity:} $Y_{s}(a) = Y_{r}(a) =: Y(a)$
    \label{asmp:stationarity}
    \item \textbf{Positivity:} \;\;\; $\exists \ola \in \cS(\cA_{-h_s}) \colon p_\pi(H_s=h_s) > 0 \implies p(\olA_{s+1:k} = \ola \mid H_s=h_s) > 0$ 
    \label{asmp:positivity}
    \item \textbf{Ignorability:}\; $Y_s(a) \indep A_s \mid H_{s-1}$
    \label{asmp:ignorability}
\end{enumerate}
}
\end{thmidasmp*}

\begin{figure}
\centering
\begin{subfigure}[t]{0.48\textwidth}
    \centering
    \includegraphics[width=0.9\textwidth]{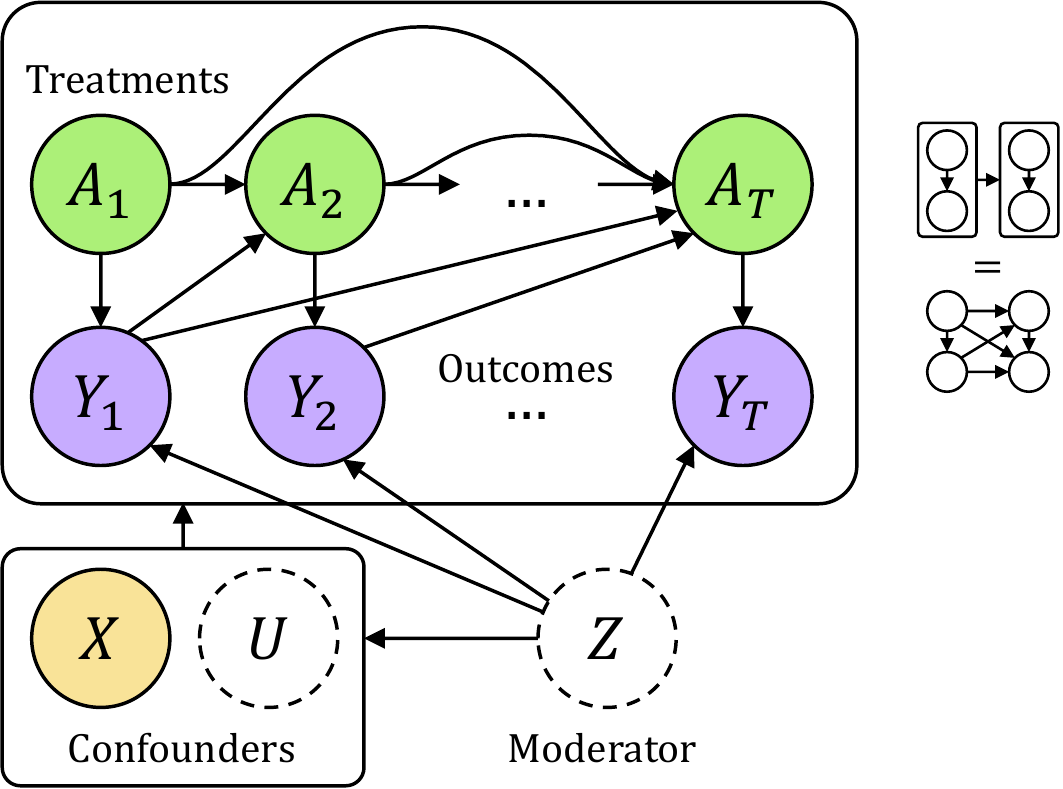}
    \caption{\label{fig:causalmodel}Assumed causal structure. Arrows between boxes indicate connections between \emph{all} variables in the boxes: $X$ is a cause of every $A_s$ and $Y_s$. Past actions are assumed not to be direct causes of future outcomes. $Z$ is a moderator of treatment effects. Dashed outlines indicate unobserved variables.}
\end{subfigure}
~
\begin{subfigure}[t]{0.48\textwidth}
    \centering
    \includegraphics[width=1.0\textwidth]{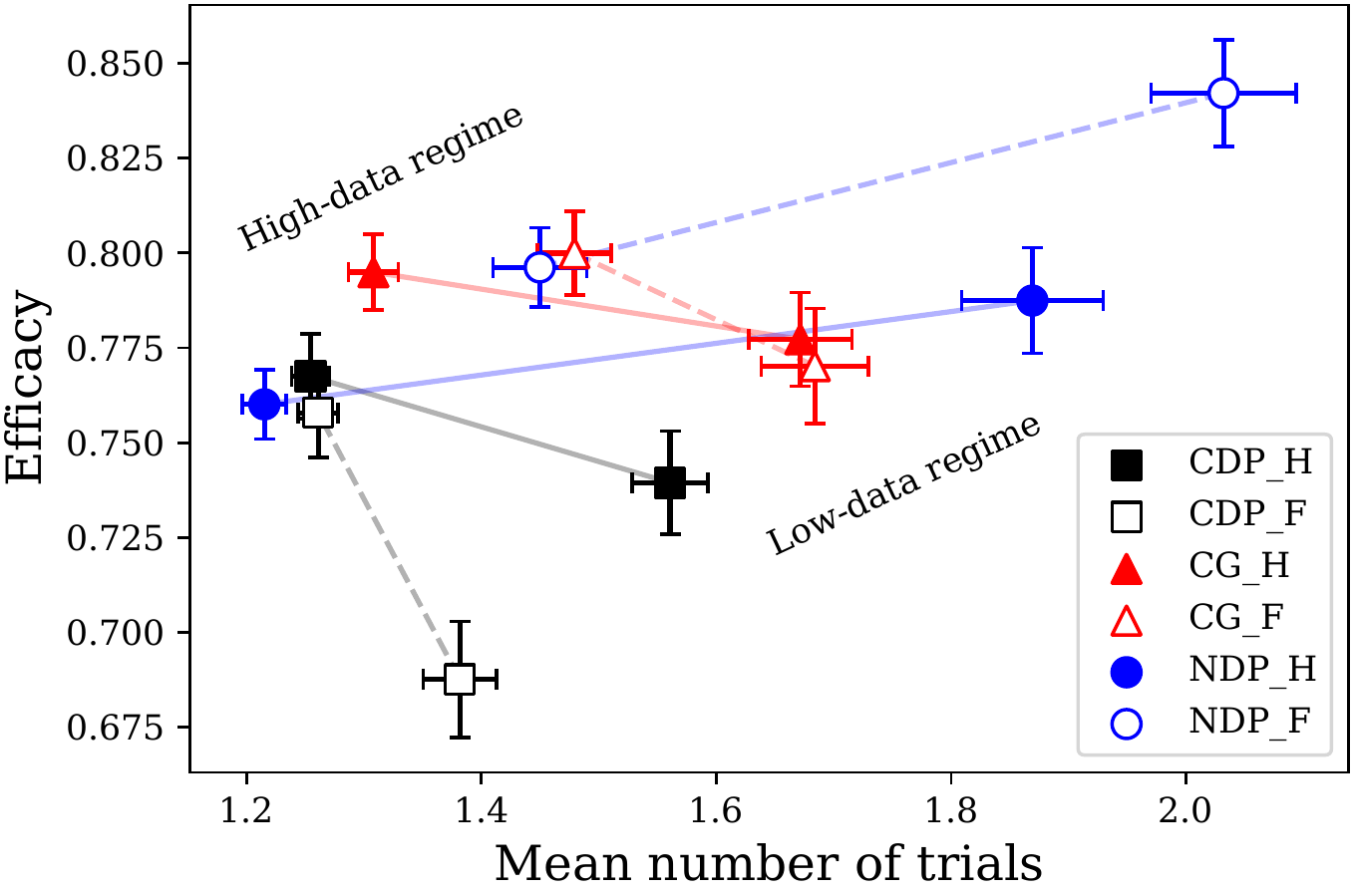}
    \caption{\label{fig:synthexp}  Efficacy (fraction of subjects for which optimal action is found) and search length for varying amounts of samples, with trade-off parameters $\delta=0.4, \epsilon=0$ (\cdp{}, \cg{}), $\lambda=0.35$ (\ndp{}). The suffix {\sc \_H} indicates historical smoothing and {\sc \_F} function approximation. Error bars indicate standard errors over 71 realizations.}
\end{subfigure}
\caption{Assumed causal structure (left) and results from synthetic experiments (right).}
\end{figure}

Ignorability follows from the backdoor criterion applied to the causal model of Figure~\ref{fig:causalmodel} when $U$ is empty~\citep{pearl2009causality}. We expand on this setting next. In contrast to conventions typically used in the literature, positivity is specified w.r.t. the considered policy class. This ensures that every action could be observed \emph{at some point} after every history $h$ that is possible under policies in $\Pi$. 
Under Assumption~\ref{asmp:stationarity} (stationarity), there is no need to try the same treatment twice, since the outcome is already determined by the first trial. We can restrict our attention to  non-repeating policies, 
$$
\Pi \subseteq \left\{ \pi \colon \cH \rightarrow \cAstop \; ; \; \pi(h) \not\in h \right\}~.
$$
Non-repeating policies such as these take the form of a decision tree of depth at most $k = |\cA|$. 

\begin{thmrem}[Assumptions~1--4 in practice] Only the positivity assumption may be verified empirically; stationarity, consistency and ignorability must be justified by domain knowledge. Readers experienced with causal estimation will be familiar with the process of establishing ignorability and consistency through graphical arguments or reasoning about statistical independences. Stationarity is more specific to our setting and without it, the notion of a near-optimal action is not well-defined---the best action could change with time. This phenomenon occurs is settings where outcomes naturally increase or decrease over time, irrespective of interventions. For example, the cognitive function of patients with Alzheimer's disease tends to decrease steadily over time~\citep{arevalo2015mini}. As a result, measures of cognitive function $Y_t(a)$ for patients on a medication $a$ will be different depending on the stage $t$ of progression that the patient is in. As a rule-of-thumb, stationarity is better justified over small time-frames or for more stable conditions.  
\label{rem:assumptions}
\end{thmrem}

%
% IDENTIFICATION
%
\subsection{Identification without unmeasured confounders}
Our stopping criterion $\rho(h)$ is an interventional quantity which represents the probability that an unused action \emph{would be} preferable to previously tried ones. In general, this is not equal to the rate at which such an action was preferable in observed data. Nevertheless, we prove that $\rho(h)$ is identifiable from observational data in the case that $U$ does not exist (ignorability holds w.r.t. $H$). First, the following lemma shows that the \emph{order} of history does not influence the probability of future outcomes.
\begin{thmlem}%
\label{lem:historyorder}%
Let $\cI$ be a permutation of $(1, ..., s)$. Under stationarity, for all $\ola \in \olcA_s$ and  $b\not\in\ola$,
\begin{equation}
p(Y(b) \mid X, \olA_s = \ola, \olY_s = \oly) = p(Y(b) \mid X, \olA_s=(a_{\cI(1)}, ..., a_{\cI(s)}), \olY_s=(y_{\cI(1)}, ..., y_{\cI(s)})) 
\end{equation}
\end{thmlem}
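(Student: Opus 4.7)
The plan is to show that $p(Y(b) \mid X, \olA_s, \olY_s)$ depends on the history only through the unordered set $\{(a_i, y_i)\}_{i=1}^{s}$, from which invariance under the permutation $\cI$ is immediate. The argument will proceed in three stages: (i) rewrite the conditioning event as an action event intersected with a set of potential-outcome constraints; (ii) apply Bayes' rule and the chain rule along time to separate the action-mechanism factor from a symmetric potential-outcome factor; (iii) show that the action factor is independent of the target $y_b$ and thus cancels after normalization.

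For stage (i), I would use consistency ($Y_i = Y_i(A_i)$) together with stationarity ($Y_i(A_i) = Y(A_i)$) to identify the event $\{\olA_s = \ola,\, \olY_s = \oly\}$ with $\{\olA_s = \ola\} \cap \bigcap_{i=1}^{s} \{Y(a_i) = y_i\}$. The collection $\bigcap_{i=1}^{s} \{Y(a_i) = y_i\}$ is a set of equations on the (time-independent) potential-outcome vector and is manifestly invariant under $\cI$.

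For stages (ii)--(iii), I would apply Bayes' rule to write, proportionally in $y_b$,
\[ p(Y(b) = y_b \mid X, \olA_s = \ola, \olY_s = \oly) \;\propto\; p(\olA_s = \ola \mid X, Y(b)=y_b, \{Y(a_i)=y_i\}_{i=1}^{s}) \cdot p(Y(b)=y_b, \{Y(a_i)=y_i\}_{i=1}^{s} \mid X). \]
The second factor is a joint of potential outcomes given $X$, which is symmetric in the indexing of the constraints and therefore identical for the original and the permuted event. For the first factor, I would expand it along time using the chain rule, obtaining a product of terms of the form $p(A_j = a_j \mid X, H_{j-1}, \{Y(a)\}_{a \in \cA})$; sequential ignorability together with stationarity then collapses each factor's conditioning to $H_{j-1}$ alone. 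Since $b \notin \ola$, the potential outcome $Y(b)$ never appears in any $H_{j-1}$, so the entire action factor is independent of $y_b$ and cancels in normalization. What remains is $p(Y(b)=y_b \mid X, \{Y(a_i)=y_i\}_{i=1}^{s})$, which depends only on the unordered set; applying the same derivation to the permuted event yields the same expression, proving the lemma.

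The main subtlety I anticipate lies in the application of ignorability: the stated form $Y_s(a) \indep A_s \mid H_{s-1}$ is pointwise in $a$, whereas the chain-rule step above really needs the joint $\{Y(a)\}_{a \in \cA} \indep A_s \mid H_{s-1}$ in order to strip all potential outcomes (including the untried $Y(b)$) from each action factor. This stronger condition is implied by the DAG of Figure~\ref{fig:causalmodel} when $U$ is absent and should be invoked explicitly. A related concern is that stationarity alone is not enough: without ignorability, unobserved confounding could let the \emph{order} of past actions carry information about $Y(b)$ via $U$, so the lemma must implicitly be read in the ignorable-context of Section~\ref{sec:ignorable}.
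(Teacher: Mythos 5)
Your proposal is correct, but it takes a genuinely different route from the paper's own proof. The paper proves the lemma by explicitly marginalizing over the latent moderator $Z$: it expands $p(Y(b), h_s)$ via the graph-implied factorization, uses $A_r \indep Z \mid H_{r-1}$ and the fact that outcomes depend on history only through $(X, Z, a_r)$ to reduce each outcome term to $p(Y(a_r)=y_r \mid z)$, cancels the propensity factors $p(a_r \mid h_{r-1})$ between numerator and denominator, and ends with a mixture $\sum_z p(Y(b)\mid z)\prod_r p(Y(a_r)=y_r\mid z)p(z)$ divided by its normalizer, which is manifestly permutation-invariant. You instead avoid $Z$ altogether: you rewrite the conditioning event via consistency and stationarity as $\{\olA_s=\ola\}\cap\bigcap_i\{Y(a_i)=y_i\}$, apply Bayes' rule, strip the action-mechanism factor $\prod_j p(A_j=a_j\mid H_{j-1})$ using joint sequential ignorability (it is order-dependent but free of $y_b$, so it cancels in normalization), and are left with $p(Y(b)\mid X,\{Y(a_i)=y_i\}_{i=1}^s)$, which depends only on the unordered set. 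Your version is somewhat more general and cleaner, since it does not commit to the latent-moderator parameterization and works for any joint law of potential outcomes given $X$; the paper's version makes explicit which graph-implied independences do the work and produces the concrete mixture-over-$Z$ representation that it later leans on. Your flagged subtleties are also well taken and consistent with the paper: the lemma's header ``under stationarity'' understates what is used — the paper's own proof invokes ``our causal graph'' (no $U$), which is exactly the joint form of ignorability you say must be assumed, so stating it explicitly, as you do, is the more careful reading.
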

Lemma~\ref{lem:historyorder} is proven in Appendix~\ref{app:stationarity}. As a consequence, we may treat two histories with the same events in different order as equivalent when estimating $p(Y(a) \mid H)$.

We can now state the following result about identification of the near-optimality constraint of \eqref{eq:mainprob}.

\begin{thmthm}
Under Assumptions~\ref{asmp:consistency}--\ref{asmp:ignorability}, the stopping criterion $\rho(h)$ in \eqref{eq:nearopt} is identifiable from the observational distribution $p(X, T, \olA, \olY)$. For any time step $s$ with history $h_s$, let $h(\cI)_s = (x, a_{\cI(1)}, ..., a_{\cI(s)}, y_{\cI(1)}, ..., y_{\cI(s)})$ be an arbitrary permutation of $h_s$. Then, for any sequence of untried actions $\ola_{s+1:k} = (a_{s+1}, ..., a_k) \in \cS(\cA_{-{h_s}})$ with $h(\cI)_r$ the (hypothetical) continued history at time $r>s$ corresponding to $\ola_{s+1:k}$ and $\oly_{s+1:k}$, and with $\mu(h_s) = \max_{(a,y) \in h_s} y$, 
\begin{align}
\rho(h_s) =  \sum_{\oly_{s+1:k} \in \cY^{k-s}}\mathds{1}\left[\max(\oly) > \mu(h_s) + \epsilon  \right] \prod_{r={s+1}}^k p(Y_r=y_r \mid A_r=a_r, H_r=h(\cI)_{r-1})~. \label{eq:main_estimator}
\end{align}
\label{thm:identifiability}
\end{thmthm}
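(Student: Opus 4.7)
The plan is threefold: first decompose $\rho(h_s)$ into a sum over joint outcome vectors of the untried actions; second, factorize the joint distribution of potential outcomes via the chain rule; third, convert each factor into a purely observational conditional using ignorability together with consistency and stationarity. Lemma~\ref{lem:historyorder} will absorb any convenient reordering of the observed part of the history.

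For the first step, since $\cY$ is discrete, I would write directly
\begin{align*}
\rho(h_s) = \sum_{\oly_{s+1:k} \in \cY^{k-s}} \ds1[\max(\oly_{s+1:k}) > \mu(h_s) + \epsilon] \cdot p(Y(a_{s+1}) = y_{s+1}, \ldots, Y(a_k) = y_k \mid H_s = h_s),
\end{align*}
for any fixed ordering $\ola_{s+1:k} \in \cS(\cA_{-h_s})$ of the untried actions, since the event $\{\max_{a\not\in h_s} Y(a) > \mu(h_s)+\epsilon\}$ is measurable with respect to the joint untried potential outcomes.

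The main step is the factorization. By the chain rule,
\begin{align*}
p(Y(a_{s+1})=y_{s+1}, \ldots, Y(a_k)=y_k \mid H_s=h_s) = \prod_{r=s+1}^{k} p\bigl(Y(a_r)=y_r \,\big|\, H_s=h_s,\, \{Y(a_j)=y_j\}_{j=s+1}^{r-1}\bigr).
\end{align*}
For each factor I would argue that the conditioning set is equivalent, via consistency (Assumption~1) and stationarity (Assumption~2), to conditioning on an observational history $H_{r-1} = h(\cI)_{r-1}$ in which the hypothetical actions $a_{s+1}, \ldots, a_{r-1}$ were actually taken in sequence with outcomes $y_{s+1}, \ldots, y_{r-1}$; indeed, under those two assumptions, the event $\{A_j=a_j,\, Y_j=y_j\}$ is equivalent to $\{A_j=a_j,\, Y(a_j)=y_j\}$. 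Ignorability (Assumption~4) then permits us to append the next action $A_r=a_r$ to the conditioning set without altering the distribution of $Y(a_r)$, after which a second application of consistency and stationarity (using $Y_r = Y(A_r) = Y(a_r)$ on the event $A_r=a_r$) replaces $Y(a_r)$ by $Y_r$, yielding the observational quantity $p(Y_r=y_r \mid A_r=a_r,\, H_{r-1}=h(\cI)_{r-1})$. Positivity (Assumption~3) guarantees that some permutation $\cI$ makes each of these conditionals well-defined, and Lemma~\ref{lem:historyorder} guarantees that each factor is invariant to the ordering of the first $s$ entries of the history, so any such $\cI$ may be used.

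The step I expect to be the main obstacle is the bookkeeping in the factorization: justifying that the iterated chain-rule conditionals---which mix the observed history $H_s=h_s$ with potential outcomes $Y(a_j)=y_j$ of previously-untried actions---can be rewritten, through ignorability applied sequentially, as an observational conditional along a continuation path in which the hypothetical actions were actually taken. This requires verifying that conditioning on intervened-upon past actions and their outcomes does not break ignorability for the subsequent action, which should follow by applying Assumption~4 one step at a time combined with the structural premise from the causal graph (Figure~\ref{fig:causalmodel}) that past actions do not directly cause future potential outcomes. Once this is in place, the theorem follows by multiplying the factors and substituting back into the sum.
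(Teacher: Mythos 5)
Your proposal is correct and follows essentially the same route as the paper's proof: decompose $\rho(h_s)$ over joint configurations of the untried actions' outcomes, factorize via the chain rule, and convert each factor to the observational conditional $p(Y_r=y_r \mid A_r=a_r, H_{r-1}=h(\cI)_{r-1})$ by sequential use of ignorability together with consistency and stationarity, with positivity securing well-definedness and Lemma~\ref{lem:historyorder} the permutation invariance. The only difference is that you spell out explicitly the chain-rule bookkeeping and the step-by-step introduction of the action events $A_j=a_j$ into the conditioning set, which the paper compresses into its second and third displayed equalities.
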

A proof of Theorem~\ref{thm:identifiability} is given in Appendix~\ref{app:identifiability}. Equation \eqref{eq:main_estimator} gives a concrete means to estimate $\rho(h)$ from observational data by constructing a model of $p(Y_s \mid A_s, H_{s-1}=h)$. Due to Assumption~\ref{asmp:stationarity} (stationarity), this model can be invariant to permutations of $h$. 
Another important consequence of this result is that, because Theorem~\ref{thm:identifiability} holds for any future sequence of actions,  \eqref{eq:main_estimator} holds also over any convex combination for different future action sequences, such as the expectation over the empirical distribution. Using likely sequences under the behavior policy will lead to lower-variance estimates.

%
% ESTIMATION
%
%\subsection{Estimation in the identifiable case}
\label{sec:ignorable}%
\begin{thmrem}In the fully discrete case, we may estimate $p(Y \mid A, H)$ using a probability table, and we do so in some experiments in Section~\ref{sec:experiments}. However, this becomes increasingly difficult for both statistical and computational reasons when $\cA$ and $\cY$ grow larger or when any of the variables are continuous. 
The permutation invariance given by Theorem~\ref{thm:identifiability} provides some relief but, nevertheless, the number of possible combinations (histories) grows exponentially with the number of actions. As a result, it is very probable that certain pairs of histories and actions $(h,a)$ are never observed in practical applications.
We consider two remedies to this. In Appendix~\ref{app:smoothing}, we give methods for leveraging observations of similar histories $h'\approx h$ in the estimation of $p(Y \mid H=h, A)$, one based on historical kernel-smoothing in the tabular case, and one based on function approximation. These are compared empirically in Section~\ref{sec:experiments}. In Section~\ref{sec:greedy}, we give bounds to use in place of the probability of unobserved potential outcomes which further mitigate the curse of dimensionality. 
\end{thmrem}

%
% UNOBSERVED CONFOUNDERS
%
\subsection{Accounting for unobserved confounders} 
\label{sec:nonignorable}%
If Assumption~\ref{asmp:ignorability} (ignorability) does not hold with respect to observed variables, the stopping criterion $\rho(h_s)$ may not be identified from observational data without further assumptions. A natural relaxation of ignorability is that the same condition holds w.r.t. an expanded adjustment set $(H_{s}, U)$, where $U \in \cU$ is an unobserved set of variables. This is the case in our assumed causal model, see Figure
~\ref{fig:causalmodel}. We require additionally that $U$ has bounded influence on treatment propensity. For all $u \in \cU, h\in \cH$, with $s = |h|$ and $\ola \in \cS(\cA_{-h})$, assume that there is a sensitivity parameter, $\alpha \geq 1$, such that
\begin{equation}\label{eq:sensitivity}
 \frac{1}{\alpha} \leq  \frac{\Pr[\olA_{s+1:k} = \ola \mid H_s=h]}{\Pr[\olA_{s+1:k} = \ola \mid U=u, H_s=h]} \leq \alpha~,
\end{equation}
where $\olA_{s+1:k}$ is defined as in Assumption~\ref{asmp:positivity}. Like ignorability, this assumption must be justified from external knowledge since $U$ is unobserved. We arrive at the following result.
\begin{thmthm} 
Assume that \eqref{eq:sensitivity} and  Assumptions~\ref{asmp:consistency}--\ref{asmp:ignorability} hold with respect to $(H_s, U)$ for all $s \in [k] $ with sensitivity parameter $\alpha\geq 1$. 
Then, for any $h \in \cH_s$, $\ola \in \cS(\cA_{-h})$ and $\nu = \mu(h)+\epsilon$, we have
\begin{align*}
& \Pr[\max_{r=s}^k Y_r > \nu \mid \olA_{s+1:k} = \ola, H_s = h] \leq \frac{\delta}{\alpha} 
\implies \rho(h) = \Pr[\max_{a \in \ola} Y(a) > \nu \mid H_s = h] \leq \delta~. 
\end{align*}%
\label{thm:nonignorable}%
\end{thmthm}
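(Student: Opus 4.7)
The plan is to bound the interventional quantity $\rho(h)$ by the observational quantity on the left-hand side times the sensitivity factor $\alpha$. The idea is to rewrite both probabilities as integrals over $p(u \mid h)$ with the same nonnegative integrand, differing only by the propensity ratio $\Pr[\ola \mid u, h]/\Pr[\ola \mid h]$, so the sensitivity assumption \eqref{eq:sensitivity} translates directly into a ratio bound on the two probabilities. The argument splits into three steps: reduce the observable event to a statement about potential outcomes, marginalize over $U$ via Bayes' rule, and then apply the sensitivity bound.

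First I would reduce the event on the left-hand side of the implication. Since $H_s=h$ fixes $\olY_s$, we have $\max_{r=1}^{s} Y_r = \mu(h) \leq \nu$, so $\max_{r=s}^k Y_r > \nu$ is equivalent to $\max_{r=s+1}^k Y_r > \nu$ on the conditioning set. Under consistency and stationarity, on the event $\olA_{s+1:k} = \ola$ every future observed outcome satisfies $Y_r = Y(a_r)$, so the event becomes $E \coloneqq \{\max_{a \in \ola} Y(a) > \nu\}$, matching the form of $\rho(h) = \Pr[E \mid H_s=h]$.

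Next I would introduce the unobserved $U$ via the law of total probability. Ignorability (Assumption~\ref{asmp:ignorability}) with respect to $(H_s, U)$, applied inductively at steps $s+1, \ldots, k$ together with stationarity, gives
\[
\Pr[E \mid \olA_{s+1:k} = \ola, H_s = h, U = u] = \Pr[E \mid H_s = h, U = u]~.
\]
Applying Bayes' rule to rewrite the observational LHS over $U$ then yields
\[
\Pr[E \mid \ola, h] = \int p(u \mid h)\, \frac{\Pr[\olA_{s+1:k} = \ola \mid U=u, H_s=h]}{\Pr[\olA_{s+1:k} = \ola \mid H_s=h]}\, \Pr[E \mid H_s=h, U=u]\, du~,
\]
while $\rho(h) = \int p(u \mid h)\, \Pr[E \mid H_s=h, U=u]\, du$. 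The two integrals share the same nonnegative integrand up to the propensity ratio, which by \eqref{eq:sensitivity} is at least $1/\alpha$. Hence $\Pr[E \mid \ola, h] \geq \rho(h)/\alpha$, and combining with the hypothesis $\Pr[E \mid \ola, h] \leq \delta/\alpha$ gives $\rho(h) \leq \delta$.

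The main obstacle will be the sequential ignorability step. Assumption~\ref{asmp:ignorability} is stated as a single-step conditional independence of $Y_s(a)$ from $A_s$ given $H_{s-1}$, whereas the proof needs the joint potential-outcome vector $\{Y(a)\}_{a \in \ola}$ to be conditionally independent of the entire future action sequence $\olA_{s+1:k}$ given $(H_s, U)$. Unrolling this along the sequence requires using stationarity to identify potential outcomes across time and the causal graph of Figure~\ref{fig:causalmodel} to rule out intermediate observed outcomes $Y_{s+1}, \ldots, Y_{k-1}$ reintroducing dependence between $A_{s+j}$ and $\{Y(a)\}$. Once that is established, the rest is a routine Bayes' rule and sensitivity manipulation.
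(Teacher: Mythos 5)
Your proposal is correct and follows essentially the same route as the paper's proof: marginalize over $U$, use ignorability with respect to $(H_s,U)$ (plus consistency/stationarity) to equate the potential-outcome and observed-outcome probabilities given $(h,u)$, apply Bayes' rule to turn the posterior ratio $p(u\mid h)/p(u\mid \ola,h)$ into the propensity ratio, and invoke the sensitivity bound (only its upper side is needed). The only cosmetic difference is that you derive the multiplicative bound $\rho(h)\leq \alpha\,\Pr[E\mid\ola,h]$ directly, whereas the paper bounds the difference of the two probabilities before reaching the same conclusion.
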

A proof of Theorem~\ref{thm:nonignorable} is given in Appendix~\ref{app:nonignorable}.
To achieve near-optimality with confidence level of $\delta$ in the presence of unobserved confounding with propensity influence $\alpha$, we must require a confidence level of at most $\delta/\alpha$. Unlike classical approaches to sensitivity analysis, as well as more recent results~\citep{kallus2018confounding}, this argument does not rely on importance (propensity) weighting.  

%
% POLICY OPTIMIZATION
%
\section{Policy optimization}
\label{sec:optimization}
We give two algorithms for policy optimization under the assumption that a model of the stopping criterion $\rho(h)$ is known. As noted previously, this problem is NP-hard due to the exponentially increasing number of possible histories~\citep{rivest1987learning}. Nevertheless, for moderate numbers of actions, we may solve \eqref{eq:mainprob} exactly using dynamic programming, as shown next. Then we propose a greedy approximation algorithm and discuss model-free reinforcement learning as  alternatives.

\subsection{Exact solutions with dynamic programming}
\label{sec:cdp}
Let $X, A, Y$ be discrete. For sufficiently small numbers of actions, we can solve \eqref{eq:mainprob} exactly in this setting. 
Let $h' = h \cup \{(a,y)\}$ denote the history where $(a,y)$ follows $h$ and recall the convention $\max_{a \in \emptyset} Y(a) = -\infty$. Now define $Q$ to be the expected cumulative return---see e.g., \citet{sutton1998introduction} for an introduction---of taking action $a$ in a state with history $h \in \cH$, 
\begin{equation}
\begin{aligned}
& Q(h, a) = r(h,a) + \mathds{1}[a\neq\stop{}]\sum_{y \in \cY} p(Y(a)=y \mid h) \max_{a'\in \cAstop}Q(h \cup \{(a, y)\},a')~,
\end{aligned}%
\label{eq:qfunc}%
\end{equation}
where $r(h,a)$ is a reward function defined below. The value function $V$ at a history $h$ is defined in the usual way, $V(h) = \max_{a} Q(h, a)$.
To satisfy the near-optimality constraint of \eqref{eq:mainprob}, we use an estimate of the function $\rho(h)$, see \eqref{eq:nearopt}, to define $\gamma_{\epsilon, \delta, \alpha}(h) \coloneqq \mathds{1}[\rho(h) < \delta/\alpha ]$ 
for parameters $\epsilon, \delta \geq 0$, $\alpha \geq 1$. The function $\gamma_{\epsilon, \delta, \alpha}(h)$ represents whether an $\epsilon, \delta/\alpha$-optimum has been found. We define
\begin{equation}
r_{\epsilon, \delta, \alpha}(h,a) = \left\{
\begin{array}{ll}
-\infty, & \text{ if } a=\stop{}, \gamma_{\epsilon, \delta, \alpha}(h)=0 \\
0, & \text{ if } a=\stop{}, \gamma_{\epsilon, \delta, \alpha}(h)=1 \\
-1, & \text{ if } a\neq\stop{}%
\end{array}
\right.
~.%
\label{eq:rewardfunc}%
\end{equation}
With this, given a model of $p(Y_s(a)\mid H_{s-1}, A_s)$, the $Q$-function of \eqref{eq:qfunc} may be computed using dynamic programming, analogous to the standard algorithm for discrete-state reinforcement learning.
\begin{thmthm}
Recall that $H_0 = (X, \emptyset, \emptyset)$. The policy maximizing \eqref{eq:qfunc}, $\pi(h) = \argmax_{a} Q(h,a)$,
with reward given by \eqref{eq:rewardfunc}
is an optimal solution to~\eqref{eq:mainprob} with objective $\E_{p_\pi}[T] = \E_X[-V(H_0)]$.%
\label{thm:correctness}%
\end{thmthm}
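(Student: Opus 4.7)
The plan is to recast \eqref{eq:mainprob} as a finite-horizon Markov decision process over histories and then invoke standard Bellman optimality. Take the state to be $h \in \cH$, the action set $\cAstop$, transitions $h \mapsto h \cup \{(a,Y(a))\}$ with $Y(a)\sim p(Y(a)\mid h)$ (identified via Theorem~\ref{thm:identifiability} under ignorability and controlled via Theorem~\ref{thm:nonignorable} otherwise), and per-step reward \eqref{eq:rewardfunc}. Restricting to non-repeating policies, any trajectory has at most $k$ non-stop steps before the $\stop$ action, so the horizon is bounded by $k+1$ and the Bellman recursion \eqref{eq:qfunc} is well-posed by backward induction on $k-|h|$.

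The key observation is that the cumulative return along a trajectory ending at history $h_T$ equals $-T + r_{\epsilon,\delta,\alpha}(h_T,\stop)$, which is $-T$ if $\gamma_{\epsilon,\delta,\alpha}(h_T)=1$ and $-\infty$ otherwise. Taking expectations, $\E_{p_\pi}[\mathrm{return}] = -\E_{p_\pi}[T]$ exactly when $\pi$ stops only at valid histories, and $-\infty$ otherwise. A feasible policy always exists, namely the one that tries every action and then stops: $\cA_{-h_k}=\emptyset$ forces $\rho(h_k)=0<\delta/\alpha$ and hence $\gamma_{\epsilon,\delta,\alpha}(h_k)=1$, giving $\E[T]\le k$. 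Thus the optimal value is finite and any optimal policy avoids $-\infty$ almost surely; by Theorem~\ref{thm:nonignorable} (which reduces to Theorem~\ref{thm:identifiability} when $\alpha=1$), this in turn guarantees the constraint $\rho(h_T)\le\delta$ of \eqref{eq:mainprob}. The standard finite-horizon Bellman optimality theorem then yields that the $Q$ in \eqref{eq:qfunc} is the optimal action-value function, that the greedy policy $\pi(h)=\argmax_a Q(h,a)$ attains $V(h)=\max_a Q(h,a)$, and hence $\E_{p_\pi}[T]=-\E_X[V(H_0)]$.

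The main subtlety I anticipate is handling the $-\infty$ reward cleanly. Since at any history with untried actions the alternative ``try a new action and continue'' achieves value at least $-k$, no optimal policy ever chooses $\stop$ at a state with $\gamma_{\epsilon,\delta,\alpha}=0$; and at $h_k$ with $\cA_{-h_k}=\emptyset$, $\gamma_{\epsilon,\delta,\alpha}(h_k)=1$ holds automatically, so $\stop$ is always available with finite reward. The optimal $V$ and $Q$ are therefore finite on all reachable states and the $\argmax$ is well-defined, closing the argument.
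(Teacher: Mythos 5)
Your argument is correct and follows essentially the same route as the paper's proof sketch: identify the cumulative return with $-T$ when stopping occurs only at histories with $\gamma_{\epsilon,\delta,\alpha}=1$ (so finite expected return is equivalent to feasibility in \eqref{eq:mainprob}), and then invoke finite-horizon Bellman optimality for the $Q$-function \eqref{eq:qfunc} with reward \eqref{eq:rewardfunc}. Your additional details---bounding the horizon by $k$ via non-repeating policies, exhibiting the try-everything-then-stop policy as feasible, and noting that no optimal policy ever incurs the $-\infty$ reward---are elaborations of the same argument rather than a different proof, and usefully fill in steps the paper's sketch leaves implicit.
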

Theorem~\ref{thm:correctness} follows from Bellman optimality and the definition of $r$ in \eqref{eq:rewardfunc}, see Appendix~\ref{app:correctness}. 

%
% Active learning & greedy policies
%
\subsection{A greedy approximation algorithm}
\label{sec:greedy}
We propose a greedy policy as an approximate solution to \eqref{eq:mainprob} in high-dimensional settings where exact solutions are infeasible to compute. 
We then discuss sub-optimality and approximation ratios of greedy algorithms. First, consider the greedy policy $\pi_G$, which chooses the treatment with the highest probability of finding a best-so-far outcome, weighted by its value, according to  
\begin{equation}
f(h,a) = \E[\mathds{1}[Y(a) > \max_{(\cdot,y)\in h} y] Y(a) \mid H_s=h]    
\end{equation}
until the stopping criterion is satisfied,
\begin{equation}
\pi_G(h) \coloneqq \left\{
\begin{array}{ll}
\stop{}, & \gamma_{\epsilon, \delta, \alpha}(h) = 1\\
\argmax_{a \not\in h} f(h, a), &  \text{otherwise}
\end{array}
\right.
~,
\label{eq:greedy}
\end{equation}
where $\gamma_{\epsilon, \delta, \alpha}$ is defined as in Section~\ref{sec:cdp}.
While using $\pi_G$ avoids solving the costly dynamic programming problem of the previous section, it still requires evaluation of $\gamma(h)$. Even for short histories, $|h| \approx 1$, computing $\gamma(h)$ involves modeling the distribution of maximum-length sequences over potentially $|\cY|^{|\cA|}$ configurations. To increase efficiency, we bound the stopping statistic $\rho$, and approximate $\gamma$, using conditional distributions of the potential outcome of single actions. 
\begin{equation}\label{eq:bounds}
\rho(h) \coloneqq p\left(\max_{a \not\in h} Y(a) > \mu(h) + \epsilon \mid h \right)
\leq \sum_{a \not\in h} p\left(Y(a) > \mu(h) + \epsilon \mid h \right)~.%
\end{equation}
A proof is given in Appendix~\ref{app:bound}. Using the upper bound in place of $\rho(h)$ leads to a feasible solution of \eqref{eq:mainprob} with more conservative stopping behavior and better outcomes but worse expected search time. In the case $\delta=0$, the exact statistic and the upper bound lead to identical policies. Representing the upper bound as a function of all possible histories still requires exponential space in the worst case, but only a small subset of histories will be observed for policies that terminate early.
We use the bound on $\rho(h)$ in experiments with both dynamic programming and greedy policies in Section~\ref{sec:experiments}. The general problem of learning bounds on  potential outcomes was studied by~\citep{makar2020estimation}.

\begin{thmex}
\label{ex:toy}
In the following example, the greedy policy does identify a near-optimal action after the smallest expected number of trials, for $\delta=0, \epsilon=0$. Let $X=0$, $Y \in \{0,1\}$ and $A \in \{1, 2, 3\}$, $Z = \{1, ..., 4\}, p(Z) = [0.20, 0.15, 0.20, 0.45]^T$ and let $C$ be the matrix with elements $c_{ij}$ such that
$$
p(Y(j) = 1 \mid Z=i) = c_{ij}, \text{ with } \; C^\top =
\begin{bmatrix}
1 & 0 & 1 & 0 \\
0 & 1 & 0 & 1 \\
1 & 0 & 0 & 1 
\end{bmatrix}.
$$
In this scenario, $p(Y(\cdot)=1) = [0.4, 0.6, 0.65]^\top$. The greedy strategy would thus start with $\pi(\emptyset) = 3$, followed by $\pi((3)) = 2$ and then $\pi((3,2)) = 1$ to guarantee successful treatment. An optimal strategy is to start with $A_1=2$ and then $A_2=1$. The expected time $\E[T]$ is 1.5 under the greedy policy and 1.4
under the optimal one. The worst-case time under the greedy strategy is 3 and 2 under the optimal. 
\end{thmex}

In Appendix~\ref{app:greedy}, we show that our problem is equivalent to a variant of active learning once a model for $p(Y(a_1), ..., Y(a_k), X)$ is known. In general, it is NP-hard to obtain an approximation ratio better than a logarithmic factor of the number of possible combinations of potential outcomes~\citep{golovin2010near,chakaravarthy2007decision}. However, for instances with additional structure, e.g., through correlations induced by the moderator $Z$, this ratio may be significantly smaller than $|\cA|\log|\cY|$.

\subsection{A model-free approach} 
\label{sec:model-free}
In off-policy evaluation, it has been noted that for long-term predictions, model-free approaches may be preferable to, and suffer less bias than, their model-based counterparts~\citep{thomas2016data}. They are therefore natural baselines for solving~\eqref{eq:mainprob}. We construct such a baseline below. 

Let $\max_{(\cdot, y) \in h} y$ represent the best outcome so far in history $h$, with $s=|h|$ and let $\lambda > 0$ be a parameter trading off early termination and high outcome. Now, consider a reward function $r(h,a)$ which assigns a reward at termination equal to the best outcome found so far. A penalty $-\lambda$ is awarded for each step of the sequence until termination, a common practice for controlling sequence length in reinforcement learning, see e.g,
~\citep{pardo2018time}. Let 
\begin{equation}
r_\lambda(h,a) = \{
0, \mbox{ if } a\neq\stop{} \;\; ; \;\;
\max_{(\cdot, y) \in h} y - \lambda|h|, \mbox { if } a=\stop{}
\}~.
\end{equation}
The policy $\pi_\lambda$ which optimizes this reward, using dynamic programming as in Section~\ref{sec:cdp}, is used as a baseline in experiments in Section~\ref{sec:experiments}. %

While this approach has the advantage of not requiring a model of future outcomes, \emph{without a model, the stopping criterion $\rho(h)$ cannot be verified and the advantage of being able to specify an interpretable certainty level is lost}. This is because the trade-off parameter $\lambda$ does not have a universal interpretation---the value of $\lambda$ which achieves a given rate of near-optimality will vary between problems. In contrast, the confidence parameter $\delta$ directly represents a bound on the probability that there is a better treatment available when stopping. Additionally, in Appendix~\ref{app:non-equivalence}, we prove that there are instances of the main problem \eqref{eq:mainprob}, for a given value of $\delta$, such that no setting of $\lambda$ results in an optimal solution. 

%
% EXPERIMENTS
%
\section{Experiments}
\label{sec:experiments}
We evaluate our proposed methods using synthetic and real-world healthcare data in terms of the quality of the best action found, and the number of trials in the search.\footnote{Implementations can be found at: \url{https://github.com/Healthy-AI/TreatmentExploration}} In particular, we study the \emph{efficacy} of policies, defined as the fraction of subject for which a near-optimal action has been found when the choice to stop trying treatments is made. Models of potential outcomes are estimated using either a table with historical smoothing (labeled with suffix \_H) or using function approximation using random forests (suffix \_F), see Appendix~\ref{app:smoothing}. Following each estimation strategy, we compare policies learned using constrained dynamic programming (\cdp{}), the constrained greedy approximation (\cg{}) and the model-free RL variant, referred to as as na\"{i}ve dynamic programming (\ndp{}), see Section~\ref{sec:optimization}. 
Establishing near-optimality is infeasible in most observational data as only a subset of actions are explored. However, as we will see, in our particular application, it may be determined exactly.
\subsection{Synthetic experiments: Effect of sample size and algorithm choice}
To investigate the effects of data set size, number of actions, dimensionality of baseline covariates and the uncertainty parameter $\delta$ on the quality of learned policies, we designed a synthetic data generating process (DGP). This DGP parameterizes probabilities of actions and outcomes as log-linear functions of a permutation-invariant vector representation of history and of $(X, Z)$, respectively. For the results here, $\cA=\{1, ..., 5\}, \cX=\{0,1\}, \cY=\{0,1,2\}, \cZ=\{0,1\}
^3$. Due to space limitations, we give the full DGP and more results of these experiments in Appendix~\ref{app:synthetic}.

We compare the effect of training set size for the different policy optimization algorithms (\cdp{}, \cg{}, \ndp{} and model estimation schemes (\_F, \_H). Here, \cdp{} and \cg{} use  $\delta=0.4, \epsilon=0$ and the upper bound of \eqref{eq:bounds} and \ndp{} $\lambda=0.35$.  We consider training sets in a low-data regime with $50$ samples and a high-data regime of $75000$, with fixed test set size of 3000 samples. Results are averaged over 71 realizations.  
In Figure~\ref{fig:synthexp}, we see that the value of all algorithms converge to comparable points in the high-data regime but vary significantly in the low-data regime. In particular, \cg{} and \cdp{} improve on both metrics as the training set grows. The time-efficacy trade-off is more sensitive to the amount of data for \ndp{} than for the other algorithms, and while additional data significantly reduces the mean number of actions taken, this comes at a small expense in terms of efficacy.  This highlights the sensitivity of the na\"{i}ve RL-based approach to the choice of reward: the scale of the parameter $\lambda$ determines a trade-off between the number of trials and efficacy, the nature of which is not known in advance. In contrast, \cdp{} and \cg{} are preferable in that $\delta$ and $\epsilon$ have explicit meaning irrespective of the sample and result in a subject-specific stopping criterion, rather than an average-case one.
    
%
% ANTIBIOTICS
%
\subsection{Optimizing search for effective antibiotics}
Antibiotics are the standard treatment for bacterial infections. However, infectious organisms can develop resistance to specific drugs~\citep{spellberg2008epidemic} and patterns in organism-drug resistance vary over time~\citep{kanjilal2018trends}. Therefore, when treating patients, it is important that an antibiotic is selected to which the organism is susceptible. For conditions like sepsis, it is critical that an effective antibiotic is found within hours of diagnosis~\citep{dellinger2013surviving}. 

As a proof-of-concept, we consider the task of selecting effective antibiotics by analyzing a cohort of intensive-care-unit (ICU) patients from the MIMIC-III database~\citep{johnson2016mimic}. We simplify the real-world task by taking effective to mean that the organism is susceptible to the antibiotic. When treating patients for infections in the ICU, it is common that microbial cultures are tested for resistance. This presents a rare opportunity for off-policy policy evaluation, as the outcomes of these tests may be used as the ground truth potential outcomes of treatment
~\citep{boominathan2020treatment}. In practice, the results of these tests are not always available at the time of treatment. For this reason, we learn models based on the test outcomes \emph{only of treatments actually given to patients}. To simplify further, we interpret concurrent treatments as sequential; their outcomes are not conflated here since they are taken from the culture tests. We stress that this task is not meant to accurately reflect clinical practice, but to serve as a benchmark based on a real-world distribution. Although a patient's condition may change as a response to treatment, bacteria typically do not develop resistance during a particular ICU stay, and so the stationarity assumption is valid.

Baseline covariates $X$ of a patient represent their age group (4 groups), whether they had infectious or skin diseases ($2\times 2$ groups), and the identity of the organism, e.g., Staphylococcus aureus. These were found to be important predictors of resistance by~\citet{ghosh2019machine}. In total, $X$ comprised 12 binary indicators. From the full set of microbial events in MIMIC-III, we restricted our study to a subset of 4 microorganisms and 6 antibiotics, selected based on overall prevalence and the rate of co-occurrence in the data. There were three distinct final outcomes of culture tests, \emph{resistant, intermediate, susceptible}, encoded as $Y = 0, 1, 2$, respectively, where higher is better. The resulting cohort restricted to patients treated using only the selected antibiotics consisted of $n=1362$ patients which had cultures tested for resistance against all antibiotics. The cohort was split randomly into a training and test set with a 70/30 ratio and experiments were repeated over five such splits. Patients treated for multiple organisms were split into different instances. A full list of variables, the selected antibiotics and organisms, and additional statistics are given in Appendix~\ref{app:antibiotics}.

We compare our learned policies to the policy used to select antibiotics in practice. However, due to censoring, e.g., from mortality, the sequence length of observed patients may not be representative of the expected number of trials used by the observed policy before an effective treatment is found. In other words, the average outcome for patients who went through $t$ treatments is a biased estimate of the value of the observed policy. Therefore, for direct comparison with current practice (``Doctor''), only the mean outcome following the first treatment point is displayed (star marker) in Figure~\ref{fig:algo_vs_doc}. For an approximate comparison with current practice, as used in multiple treatment trials, we created a baseline dubbed ``Emulated doctor''. It uses a tabular estimate of the observed policy to imitate the choices made by doctors in the dataset in terms of the history $H = (X, \olA, \olY)$, i.e., it operates on the same information as the other algorithms. We compare this to \cdp{}, \cg{} and \ndp{}, and evaluate all policies using culture tests for held-out observations. We sweep all hyperparameters uniformly over 10 values; for \cdp{}, \cg{}, $\delta \in [0,1]$, for \ndp{}\_H,  $\lambda\in [0, 0.5]$ and for \ndp\_F, $\lambda\in [0, 1]$.

In Figure~\ref{fig:algo_vs_doc}, we see that \cg{}, \cdp{} and \ndp{}, with function approximation, all learn comparable policies that are preferable to the estimated behavior policy. The mean search length was 1.26 for \cdp{} and \ndp{}, 1.28 for \cg{} and 1.38 for Emulated doctor. We see that the best treatment found after a single trial is slightly better in the raw data (star marker). This may be because more information is available to the physician than to our algorithms. The physician could (1) take into account the original value of continuous variables, such as age, instead of using age groups and (2) use more features of the patient in order to find the right treatment. Using more covariates in this instance would make the problem impractical to solve without further approximations since the table generated by the dynamic programming algorithm grows exponentially. The current variable set was restricted for this reason. In Figure~\ref{fig:func_vs_freq}, we see that across different values of $\delta, \lambda$, all algorithms achieve near-optimal efficacy (almost 1), but vary in their search time. \cdp{} is equal or preferable to \cg{}, with the model-free baseline \ndp{} achieving the worst results. A much more noticeable difference is that between policies learned using the model estimated with function approximation (suffix \_F) and those with a (smoothed) tabular representation (suffix \_H). 

\begin{figure}
    \centering
    \begin{subfigure}[t]{0.48\textwidth}
    \centering
        \includegraphics[height=0.21\textheight]{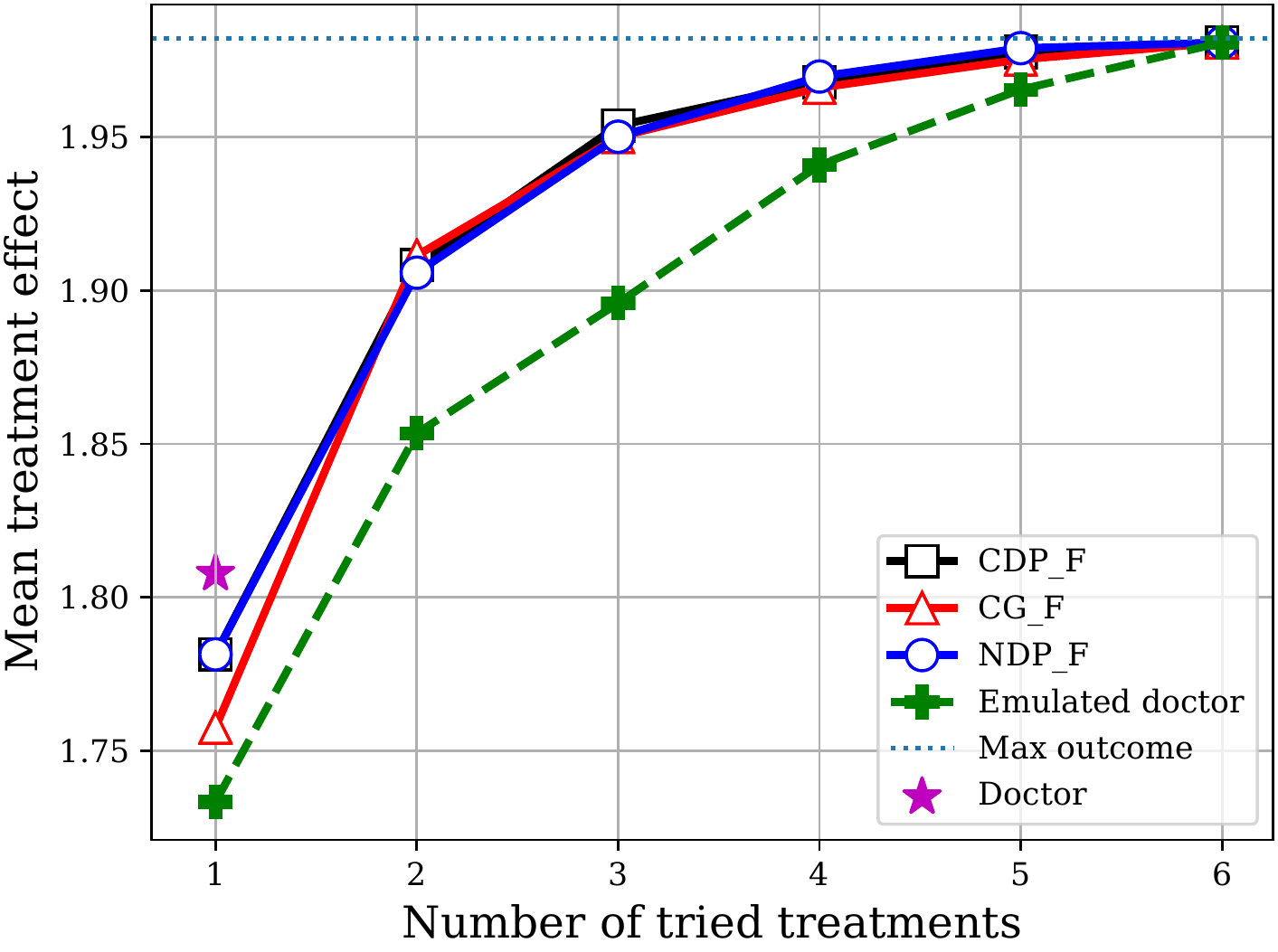}
        \caption{Mean best-at-termination or best-so-far outcome found after a given number of trials, across all subjects, for different policies. $\delta = 0, \lambda=0.35$.}
        \label{fig:algo_vs_doc}
    \end{subfigure}
    ~
    \begin{subfigure}[t]{0.48\textwidth}
        \centering
        \includegraphics[height=0.21\textheight]{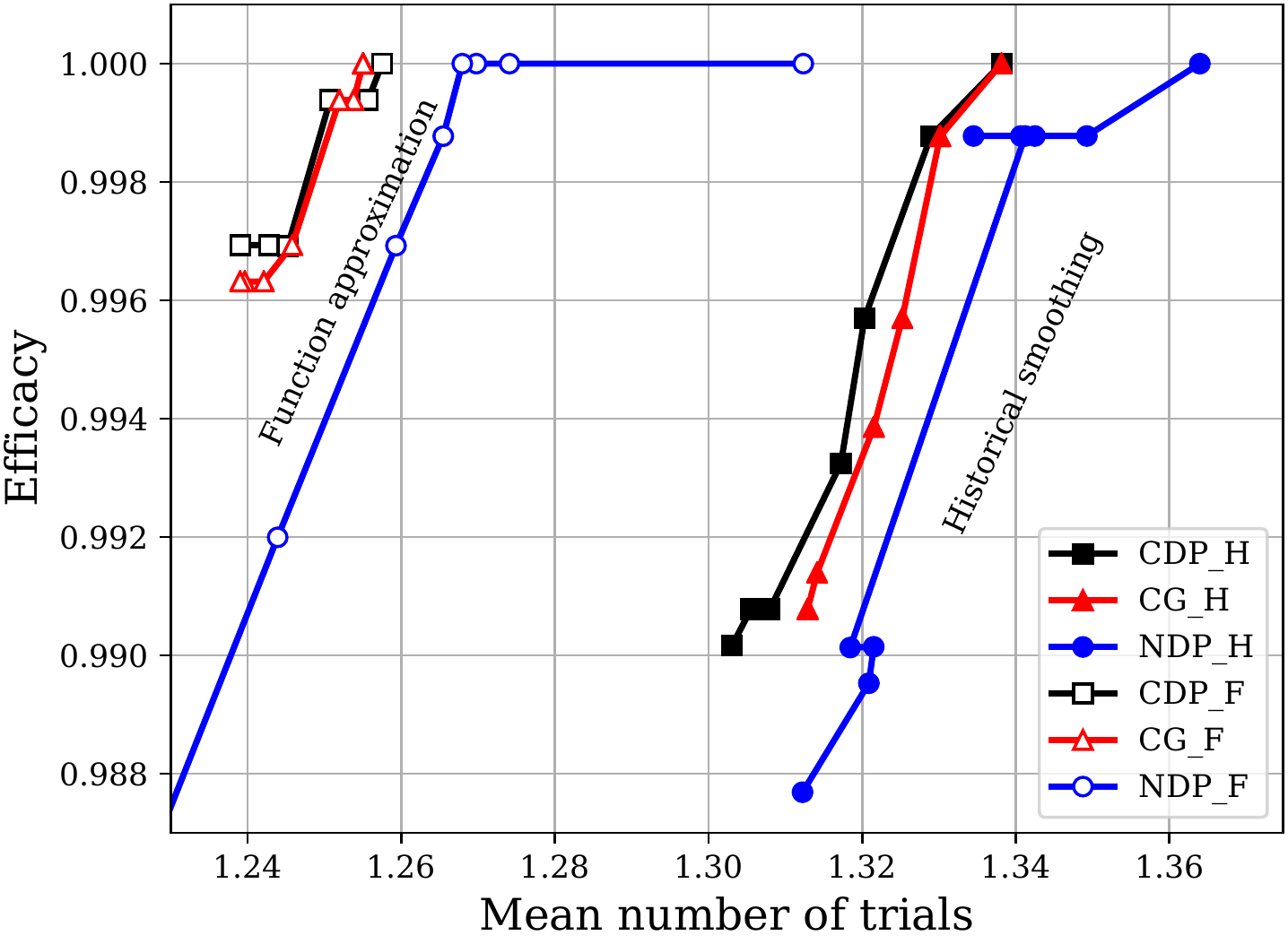}
        \caption{Efficacy of antibiotics vs the mean number of trials for different values of $\delta$ and $\lambda$ (one value per marker) and different model estimation schemes.}
        \label{fig:func_vs_freq}
    \end{subfigure}
    \caption{Results from the antibiotics experiment. Average best-found outcome of different policies across patients at different stages of the search (a) and efficacy and search time (number of treatment trials) as functions of $\delta$ (b). In plot (a), at a given number of trials, the best-so-far outcome is used for ongoing sequences, and the best-at-termination is used for terminated ones. Efficacy refers to the rate at which a near-optimal treatment is found at the given $\delta$. Suffixes \_F and \_H indicates model estimation using function approximation and historical smoothing respectively. $\epsilon = 0$.
    \label{fig:antibiotics_results}}%
\end{figure}

\section{Conclusion}
We have formalized the problem of learning to search efficiently for causally effective treatments. We have given conditions under which the problem is solvable by learning from observational data, and proposed algorithms that estimate a causal model and perform policy optimization. Our solution using constrained dynamic programming (CDP) in an exponentially large state space illustrates the associated computational difficulties and prompted our investigation of two approximations, one based on greedy search and one on model-free reinforcement learning. We found that the greedy search algorithm performed comparably to the exact solution in experiments and was less sensitive to sample size. Determining  conditions under which greedy algorithms are preferable statistically is an interesting open question. We believe that our work will have the largest impact in settings where a) the assumption of potential outcome stationarity is justified, b) even a small reduction in search time is valuable and c) a transparent trade-off between efficacy and search time is valuable in itself. 
%

%

%
% STATEMENT OF BROADER IMPACT
%
\section*{Broader impact}
Personalized and partially automated selection of medical treatments is a long-standing goal for machine learning and statistics with the potential to improve the lives of patients and reduce the workload on physicians. This task is not without risk however, as poor decisions may fail to reduce or even increase suffering. It is important that implementations of such ideas is guided by strong domain knowledge, thorough evaluation and that checks and balances are in place. Many previous works in this field aim to identify new policies for treatment or doses with the goal of improving treatment response itself. This goal is not always feasible to achieve---some conditions are fundamentally hard to treat with available medications and procedures. In contrast, we focus on conditions where a good enough treatment would be identified by an existing policy given enough time, with the goal of reducing this search time as much as possible. The trade-off between a good outcome and time is made transparent using a model of patient outcomes and a certainty parameter. With this, we hope to contribute towards making machine learning methods more suitable for clinical implementation.

\section*{Funding disclosure}
This work was supported in part by the Wallenberg AI, Autonomous Systems and Software Program (WASP) funded by the Knut and Alice Wallenberg Foundation.

%

%
% ACKNOWLEDGEMENTS
%
%\begin{ack}
%...
%\end{ack}

\bibliographystyle{humannat}
\bibliography{main}

\appendix

\setcounter{equation}{0}
\setcounter{table}{0}
\setcounter{figure}{0}
\setcounter{thmthm}{0}
\setcounter{thmprop}{0}
\setcounter{thmlem}{0}
\setcounter{thmcol}{0}
\setcounter{thmconj}{0}
\setcounter{thmasmp}{0}
\setcounter{thmcorr}{0}

\def\theequation{S\arabic{equation}}
\def\thetable{S\arabic{table}}
\def\thefigure{S\arabic{figure}}
\renewcommand{\thethmthm}{S\arabic{thmthm}}
\renewcommand{\thethmprop}{S\arabic{thmprop}}
\renewcommand{\thethmlem}{S\arabic{thmlem}}
\renewcommand{\thethmcol}{S\arabic{thmcol}}
\renewcommand{\thethmconj}{S\arabic{thmconj}}
\renewcommand{\thethmasmp}{S\arabic{thmasmp}}
\renewcommand{\thethmcorr}{S\arabic{thmcorr}}

%
% APPENDIX
%
\clearpage

\makeatletter
\newcommand{\settitle}{\@maketitle}
\title{Supplementary material for: Learning to search efficiently for causally near-optimal treatments}
\author{%
  Samuel H\aa{}kansson  \\
  University of Gothenburg\\
  \texttt{samuel.hakansson@gu.se} \\
  \And
  Viktor Lindblom  \\
  Chalmers University of Technology\\
  \texttt{viklindb@student.chalmers.se} \\
  \And
  Omer Gottesman \\
  Brown University\\
  \texttt{omer\_gottesman@brown.edu} \\
  \And
  Fredrik D. Johansson  \\
  Chalmers University of Technology\\
  \texttt{fredrik.johansson@chalmers.se} \\
}
\makeatother
\settitle

\section{Proofs of theorems}
\label{app:proofs}

\subsection{Proof of Lemma~\ref{lem:historyorder} (Stationarity)}
\label{app:stationarity}

\begin{thmlem}[Lemma~\ref{lem:historyorder} restated] 
\label{lem:app_stationarity}%
Let $\cI$ be a permutation of the sequence $(1, ..., s)$. Then, for our causal graph under Assumption~\ref{asmp:stationarity}, for $b\in \cA$,
$$
p(Y(b) \mid X, \olA_s = \ola, \olY_s = \oly) = p(Y(b) \mid X, \olA_s=(a_{\cI(1)}, ..., a_{\cI(s)}), \olY_s=(y_{\cI(1)}, ..., y_{\cI(s)})) 
$$
\end{thmlem}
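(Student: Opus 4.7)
\emph{Proof sketch.} The plan is to show that the information in the conditioning event $\{X, \olA_s = \ola, \olY_s = \oly\}$ relevant for predicting $Y(b)$ reduces to the unordered multiset $\{(a_i, y_i) : i = 1, \ldots, s\}$, which is by construction invariant under any permutation of the indices.

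First, I would combine consistency (Assumption~\ref{asmp:consistency}) with stationarity (Assumption~\ref{asmp:stationarity}) to identify each observed outcome with a time-invariant potential outcome, rewriting the conditioning event as $\{\olA_s = \ola\} \cap \{Y(a_i) = y_i : i = 1,\ldots,s\}$. Because $Y(a)$ carries no time index, the outcome component of the history is already symmetric in $i$; only the action component still carries order.

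Next, I would marginalize over the unobserved moderator $Z$ (and $U$, in the non-ignorable setting) using the causal model in Figure~\ref{fig:causalmodel},
\begin{equation*}
p(Y(b) \mid X, \olA_s = \ola, \olY_s = \oly) = \sum_z p(Y(b) \mid X, z)\, p(z \mid X, \olA_s = \ola, \olY_s = \oly),
\end{equation*}
and apply Bayes' rule to the posterior of $Z$. The likelihood $p(\olA_s = \ola, \olY_s = \oly \mid X, Z)$ factors into a policy term $\prod_{i=1}^s p(A_i = a_i \mid H_{i-1})$, which under ignorability (Assumption~\ref{asmp:ignorability}) depends only on observed history and not on $Z$, and a potential-outcomes term $p(Y(a_1) = y_1, \ldots, Y(a_s) = y_s \mid X, Z)$, which is a joint probability over the unordered set of pairs and is therefore symmetric in $i$. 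The $Z$-free policy factor cancels between numerator and normalizer of the posterior, leaving a quantity that depends on the history only through $\{(a_i, y_i)\}$. Consequently, $p(z \mid X, \olA_s, \olY_s)$, and hence $p(Y(b) \mid X, \olA_s, \olY_s)$, is invariant under the permutation $\cI$.

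The main obstacle will be cleanly separating the (order-dependent) behavior-policy contribution to the likelihood from the (order-invariant) potential-outcomes contribution. The essential point is that under the assumed causal graph, the action at each step is a function of observed history only; any information that the order conveys is information about the policy, not about the unobserved factors $Z$ that determine $Y(b)$, so it vanishes when forming the posterior over $Z$.
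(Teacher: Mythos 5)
Your proposal is correct and follows essentially the same route as the paper's proof: marginalize over the unobserved moderator $Z$, factor the history likelihood into policy terms $p(a_r \mid h_{r-1})$ (order-dependent but $Z$-free, since actions depend only on observed history) and outcome terms (order-invariant under stationarity), and cancel the policy terms in the Bayes ratio. Your phrasing via the posterior $p(z \mid X, \olA_s, \olY_s)$ is just a repackaging of the paper's direct expand-and-cancel computation, so no substantive difference or gap.
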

\begin{proof}
Let $h=(x, (a_1, y_1), ..., (a_s, y_s))$. Let $\pi$ be a permutation of ${1, ..., s}$ and $\pi(r)$ the index assigned to $r$. We use the short-hands $p(a) = p(A=a)$, $p(A\mid b) = p(A\mid B=b)$, etc.
\begin{align*}
& p(Y(a) \mid H_s=h_s, A_s=a_s) & \text{stationarity}\\
& = \frac{p(Y_s(a), h_s,  a_s)}{p(h_s, a_s)} \\
& = \frac{\sum_z p(Y_s(a), h_s, a_s, z)}{\sum_z p(h_s, a_s, z)} & \text{prob. laws}\\
& = \frac{\sum_z  p(Y_s(a) \mid h_s, a_s, z) p(a_s \mid h_s, z) p(h_s \mid z) p(z)}{\sum_z p(a_s \mid h_s, z)p(h_s \mid z) p(z)}  & \text{expand} \\
& = \frac{\sum_z  p(Y_s(a) \mid h_s, a_s, z) p(a_s \mid h_s, z) \prod_r p(y_r \mid h_r, a_r, z)p(a_r \mid h_r, z) p(z)}{\sum_z \prod_r p(y_r \mid h_r, a_r, z)p(a_r \mid h_r, z) p(z)} & \text{expand history} \\
& = \frac{\sum_z  p(Y_s(a) \mid a_s, z) p(a_s \mid h_s) \prod_r p(y_r \mid a_r, z)p(a_r \mid h_r) p(z)}{\sum_z  p(a_s \mid h_s) \prod_r p(y_r \mid a_r, z)p(a_r \mid h_r) p(z)} 
& \text{$A_s \indep Z \mid H_s$} \\
& = \frac{\sum_z  p(Y_s(a) \mid z) \prod_r p(y_r(a_r) \mid z)p(z)}{\sum_z \prod_r p(y_r(a_r) \mid z)p(z)} 
& \text{cancel terms}\\
& = \frac{\sum_z  p(Y(a) \mid z) \prod_r p(Y_{\pi(r)}(a_r)=y_r \mid z)p(z)}{\sum_z \prod_r p(Y_{\pi(r)}(a_r)=y_r \mid z)p(z)}
& \text{stationarity} 
\end{align*}
\end{proof}
Since the last expression is invariant to $\pi$, the result follows. 

\subsection{Proof of Theorem~\ref{thm:identifiability} (Identifiability)}
\label{app:identifiability}

\begin{thmthm}[Theorem~\ref{thm:identifiability} restated]
Under Assumptions~\ref{asmp:consistency}--\ref{asmp:ignorability}, the stopping statistic $\rho(h)$ in \eqref{eq:nearopt} and $\epsilon, \delta$-optimality are identifiable from the observational distribution $p(X, T, \olA, \olY)$. In particular, for any time step $s$ with history $h_s$, let $h(\cI)_s = (x, a_{\cI(1)}, ..., a_{\cI(s)}, y_{\cI(1)}, ..., y_{\cI(s)})$ be an arbitrary permutation of $h_s$. Then, for any sequence of untried (future) actions $\ola_{s+1:k} = (a_{s+1}, ..., a_k) \in \cS(\cA_{-{h_s}})$ with $h(\cI)_r$ the continued history at time $r>s$ corresponding to $\ola_{s+1:k}$ and $\oly_{s+1:k}$,
\begin{align}
\rho(h_s) =  \sum_{\oly_{s+1:k} \in \cY^{k-s}}\mathds{1}\left[\max(\oly) > \mu(h_s) + \epsilon  \right] \prod_{r={s+1}}^k p(Y_r=y_r \mid A_r=a_r, H_r=h(\cI)_{r-1})~. \label{eq:main_estimator_app}
\end{align}
where $\mu(h_s) = \max_{(a,y)\in h_s} y$.
\label{thm:identifiability_app}
\end{thmthm}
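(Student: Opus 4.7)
The plan is to express $\rho(h_s)$ as a sum over possible future outcome sequences and then factor the resulting joint conditional over potential outcomes into observational conditionals using the identifying assumptions. Fixing any ordering $\ola_{s+1:k} = (a_{s+1}, \ldots, a_k)$ of the untried actions $\cA_{-h_s}$, I would first write
\begin{equation*}
\rho(h_s) = \sum_{\oly_{s+1:k} \in \cY^{k-s}} \mathds{1}\!\left[\max(\oly_{s+1:k}) > \mu(h_s)+\epsilon\right] p\!\left(Y(a_{s+1})=y_{s+1}, \ldots, Y(a_k)=y_k \mid H_s=h_s\right),
\end{equation*}
which is valid because the event $\max_{a \notin h_s} Y(a) > \mu(h_s)+\epsilon$ depends only on the (ordering-free) maximum of the untried potential outcomes, so the particular ordering $\ola_{s+1:k}$ is immaterial.

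Next, I would apply the chain rule to the joint conditional on the right-hand side, writing it as $\prod_{r=s+1}^{k} p(Y(a_r)=y_r \mid Y(a_{s+1})=y_{s+1}, \ldots, Y(a_{r-1})=y_{r-1}, H_s=h_s)$. The core causal step is to show that the $r$-th factor equals $p(Y_r = y_r \mid A_r = a_r, H_{r-1} = h(\cI)_{r-1})$, where $h(\cI)_{r-1}$ is any permutation of $h_s$ extended with the hypothetical pairs $(a_{s+1}, y_{s+1}), \ldots, (a_{r-1}, y_{r-1})$. This proceeds in three sub-steps: by consistency (Assumption~\ref{asmp:consistency}), each hypothetical event $\{Y_t(a_t)=y_t\}$ along the path coincides with $\{Y_t = y_t\}$ under $\{A_t = a_t\}$, converting the conditioning set from interventional to observational; ignorability at time $r$ (Assumption~\ref{asmp:ignorability}) then allows us to insert the event $\{A_r = a_r\}$ into the conditioning set without altering the distribution of $Y_r(a_r)$; and stationarity (Assumption~\ref{asmp:stationarity}) identifies $Y_r(a_r)$ with $Y(a_r)$. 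Positivity (Assumption~\ref{asmp:positivity}) ensures every such conditional is well-defined for histories reachable under the policy class. Finally, Lemma~\ref{lem:historyorder} justifies the freedom to choose any permutation $\cI$ of the original history $h_s$ inside $h(\cI)_{r-1}$, since the conditional distribution of future potential outcomes is invariant to reorderings of past observed (action, outcome) pairs.

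The main obstacle will be the bookkeeping in the chain-rule argument: carefully iterating consistency and ignorability so that each ignorability application is to the ``current'' action $A_r$ conditional on the already-constructed observed history $H_{r-1}$, rather than attempting a single joint ignorability statement over the entire untried future. The telescoping structure of the chain rule, combined with the permutation invariance from Lemma~\ref{lem:historyorder}, is what lets this local application accumulate into the global product representation claimed in \eqref{eq:main_estimator_app}.
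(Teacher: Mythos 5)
Your proposal follows essentially the same route as the paper's own proof: expand $\rho(h_s)$ as a sum over future outcome sequences, factor the joint conditional via the chain rule, convert each factor to an observational conditional by iterating consistency, sequential ignorability, and stationarity, and invoke Lemma~\ref{lem:historyorder} for permutation invariance and positivity for well-definedness. The plan is correct and matches the paper's argument in both structure and level of detail.
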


\begin{proof}
Fix any history $h = (x, (a_1, y_1), ..., (a_s, y_s)) \in \cH$ with $s = |h|$, any time points $q, r \in [k]$, any $a\in \cA$ and let $\ola \in \cS(\cA)$ such that the subsequence $\ola_s = (a_1, ..., a_s)$ coincides with $h$. Then, by Assumption~\ref{asmp:stationarity}, we have
$$
Y_r(a) = Y_q(a) = Y(a) \;\;\mbox{ and }\;\; \max_{a \not\in h} Y(a) = \max_{r=s+1}^k Y_r(a_r)~.
$$
Below, we sum over sequences of outcomes $\oly_{s+1:k} = (y_{s+1}, ..., y_k) \in \cY^{k-s}$ and refer to the history $h_r$ for $r>s$. Here, $h_r = (x, (a_1, y_1), ..., (a_r, y_r))$ is a sequence of both observed actions and outcomes (corresponding to the sub-sequence $h_s \subseteq h_r$) and unobserved ones. By definition, we have for any sequence of actions $\ola \in S(\cA)$ according to the above, for any $\mu \in \cY$
\begin{align*}
\rho_\mu(h_s) 
& = \sum_{\oly_{s+1:k}\in \cY^{k-s}} p([Y(a_{s+1}), \ldots, Y(a_k)] = \oly_{s+1:k} \mid H_s = h_s)\mathds{1}[\max(\oly_{s+1:k}) > \mu] \\
& = \sum_{\oly_{s+1:k} \in \cY^{k-s}}\mathds{1}\left[\max(\oly_{s+1:k}) > \mu \right] \prod_{r={s+1}}^k p(Y_r(a_r)=y_r \mid H_{r-1} = h_{r-1}) \\
& = \sum_{\oly_{s+1:k} \in \cY^{k-s}}\mathds{1}\left[\max(\oly_{s+1:k}) > \mu \right] \prod_{r={s+1}}^k p(Y_r=y_r \mid A_r = a_r, H_{r-1} = h_{r-1})~.
\end{align*}
In the second step we apply Assumption~\ref{asmp:stationarity} (stationarity) and in the third Assumptions~\ref{asmp:consistency}--Assumptions~\ref{asmp:ignorability} (consistency, sequential ignorability). 
Finally, from ignorability and stationarity, we have for any permutations $h(\cI)_s$, 
$$
\rho_\mu(h_s) = \sum_{\oly_{s+1:k} \in \cY^{k-s}}\mathds{1}\left[\max(\oly_{s+1:k}) > \mu \right] \prod_{r={s+1}}^k p(Y_r=y_r \mid A_r = a_r, H_{r-1} = h(\cI)_{r-1})~.
$$
Doing so, we obtain the result in \eqref{eq:main_estimator}. In solving \eqref{eq:mainprob}, we only need to evaluate $\rho(h_s)$ for histories with positive support under $p_\pi$. Assumption~\ref{asmp:positivity} (positivity) ensures that there exists at least one permutation $\ola \in \cS(\cA_{-h_s})$ such that $p(A_{s+1:k} = \ola \mid H_s = h_s)$. This in turn implies identifiability. 
\end{proof}

\subsection{Bounds on stopping criterion}
\label{app:bound}
\begin{thmthm}
For any threshold $\mu \in \cY$ and history $h \in \cH$, we have under Assumption~\ref{asmp:stationarity},
\begin{equation}
\begin{aligned}
\underbrace{\max_{a \not\in h} \left[ p\left( Y(a) > \mu \mid h \right) \right]}_{\textnormal{Used for less conservative stopping}} 
\leq \underbrace{p\left(\max_{a \not\in h} Y(a) > \mu \mid h \right)}_{=: \; \rho_\mu(h)}
\leq \underbrace{\sum_{a \not\in h} p\left(Y(a) > \mu \mid h \right)}_{\textnormal{Used for more conservative stopping}}%
\label{eq:unionbound2}%
\end{aligned}
\end{equation}
\end{thmthm}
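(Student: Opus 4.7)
The plan is to treat both inequalities as elementary consequences of writing the event $\{\max_{a \notin h} Y(a) > \mu\}$ as a union of single-action events, with stationarity invoked only to make sure $Y(a)$ is a well-defined random variable (independent of the time index at which $a$ would be tried). I would start by fixing $h \in \cH$ with untried set $\cA_{-h} = \{a \in \cA : a \notin h\}$, and by Assumption 2, interpret each $Y(a)$ as a single, time-invariant potential outcome, so that the event $E_a := \{Y(a) > \mu\}$ is unambiguously defined conditional on $H=h$.

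Next I would record the key set-theoretic identity
\begin{equation*}
\left\{\max_{a \in \cA_{-h}} Y(a) > \mu\right\} \;=\; \bigcup_{a \in \cA_{-h}} E_a,
\end{equation*}
which holds because the maximum exceeds $\mu$ iff at least one of the $Y(a)$ does. Both inequalities then fall out of this identity. For the upper bound, I would apply the (countable, here finite) union bound to the right-hand side, conditional on $H=h$, giving $p(\bigcup_a E_a \mid h) \leq \sum_{a \in \cA_{-h}} p(E_a \mid h)$. For the lower bound, I would use monotonicity of probability: for any particular $a^\star \in \cA_{-h}$, $E_{a^\star} \subseteq \bigcup_{a \in \cA_{-h}} E_a$, hence $p(E_{a^\star} \mid h) \leq p(\bigcup_a E_a \mid h)$; taking the maximum over $a^\star \in \cA_{-h}$ on the left yields the claim.

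There is no real obstacle here — this is a textbook union bound / monotonicity sandwich. The only point that deserves a brief sentence is why stationarity is needed at all: without Assumption 2, the symbol $Y(a)$ would have to be replaced by $Y_{r}(a)$ for some future time $r$, and the event $\{\max_{a \notin h} Y(a) > \mu\}$ would not be well-defined without additional bookkeeping about the order in which untried actions are attempted. Stationarity collapses this ambiguity and lets the union-bound argument proceed verbatim.
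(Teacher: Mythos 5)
Your proof is correct and follows essentially the same route as the paper's: the paper likewise writes the event $\{\max_{a\notin h} Y(a) > \mu\}$ as a union of the per-action events, applies Boole's inequality for the upper bound, and uses monotonicity (a single event is contained in the union) before maximizing over $a$ for the lower bound, merely spelling this out via sums over joint outcome vectors $\oly$. Your remark on why stationarity is needed to make $Y(a)$ well-defined is also consistent with the paper's use of Assumption~\ref{asmp:stationarity}.
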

\begin{proof}
Let $\cA_{-h} = \{a \in \cA \colon a\not\in h\}$. We start with the upper bound. By definition 
$$
\{ \oly \in \cY^{|\cA_{-h}|} \colon \max(\oly) > \mu\} = \bigcup_{a \in \cA_{-h}} \{\oly \in \cY^{|\cA_{-h}|} \colon y_a > \mu \}
$$ 
Hence, by Boole's inequality, 
\begin{align*}
p\left(\max_{a \not\in h} Y(a) > \mu \mid H=h \right) 
& \leq \sum_{a \in \cA_{-h}} \sum_{\oly \in \cY^{|\cA_{-h}|} \colon y(a) > \mu } p\left( Y(\cA_{-h}) = \oly \mid H=h \right) \\
& = \sum_{a \in \cA_{-h}} p\left(Y(a) > \mu \mid h \right)~.
\end{align*}

For the lower bound, the argument is equally straight-forward.
\begin{align*}
p\left(\max_{a \not\in h} Y(a) > \mu \mid H=h \right) 
& = \sum_{\oly \colon \max(\oly)>\mu} p\left(Y(\cA_{-h}) = \oly \mid H=h \right) \\
& \geq \max_{a \in \cA_{-h}} \sum_{\oly \colon y(a) >\mu} p\left(Y(\cA_{-h}) = \oly \mid H=h \right) \\
& = \max_{a \in \cA_{-h}} p\left(Y(a) > \mu \mid H=h \right)~.
\end{align*}
\end{proof}
%
% ALTERNATIVE PROOF OF UPPER BOUND BELOW
%
\iffalse
\begin{proof}
 By definition, 
$$
p\left(\max_{a \not\in h} Y(a) > \mu \mid H= h \right) =  \sum_{\oly \colon \max_i y_i > \mu} p\left( \olY(\cA_{-h}) = \oly \mid H=h \right)~.
$$
Now, consider an arbitrary ordering of $\cA_{-h}$, $a_1, ..., a_l$. Then, 
\begin{align*}
\sum_{\oly \colon \max_i y_i > \mu} p\left( \olY(\cA_{-h}) = \oly \mid H=h \right) 
& \leq \sum_{j=1} \sum_{\oly \colon y_j > \mu} p\left( \olY(\cA_{-h}) = \oly \mid H=h \right) \\
& \leq \sum_{j=1} \sum_{\oly \colon y_j > \mu} p\left(Y(a_j) = y_j \mid H=h \right) \\
& = \sum_{j=1} p\left(Y(a_j) > \mu \mid H=h \right)
\end{align*}
\end{proof}
\fi

\subsection{Proof of Theorem~\ref{thm:nonignorable}}
\label{app:nonignorable}
We restate the following assumption and Theorem~\ref{thm:nonignorable} for convenience. 
\begin{thmasmp}
A random variable $U$ has $\alpha$-bounded propensity sensitivity relative to $H$ if for all $u \in \cU, h\in \cH$, with $s = |h|$ and $\ola \in \cA^{k-s}$, for some $\alpha \geq 1$, with $\olA_{s+1:k} = (A_{s+1}, ..., A_k)$,
$$
 \frac{1}{\alpha} \leq  \frac{\Pr[\olA_{s+1:k}  = \ola \mid H=h]}{\Pr[\olA_{s+1:k}  = \ola \mid U=u, H=h]} \leq \alpha~.
$$%
\label{asmp:app_sens}%
\end{thmasmp}%

\begin{thmthm}[Theorem~\ref{thm:nonignorable} restated] 
Given is that Assumption~\ref{asmp:app_sens} (bounded propensity) holds for $H, U$ with sensitivity parameter $\alpha\geq 1$ and Assumption~\ref{asmp:ignorability} (ignorability) holds for all $s \in [k] $ w.r.t. confounders $(H_s, U)$. 
Let $Y_r$ be the (hypothetical) outcome of treatment $A_r$ at time $r=s+1, ..., k$.
Then, for any history $h \in \cH_s$ and the set of treatments $\ola = \cA \setminus \cA(h)$, it holds that
\begin{align*}
& \Pr[\max_{r=s+1}^k Y_r > \mu \mid \olA_{s+1:k}  = \ola, H_s = h] \leq \frac{\delta}{\alpha} 
\implies \Pr[\max_{a \in \ola} Y(a) > \mu \mid H_s = h] \leq \delta~.
\end{align*}%
\label{thm:app_nonignorable}%
\end{thmthm}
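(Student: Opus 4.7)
The plan is to pivot between the interventional quantity $\Pr[\max_{a \in \ola} Y(a) > \mu \mid H_s=h]$ and the observational quantity $\Pr[\max_{r=s+1}^k Y_r > \mu \mid \olA_{s+1:k}=\ola, H_s=h]$ by marginalizing over the unobserved confounder $U$ and using the sensitivity bound to control the resulting change-of-measure factor. Concretely, I would first write
\begin{equation*}
\Pr[\max_{a \in \ola} Y(a) > \mu \mid H_s=h] = \int \Pr[\max_{a \in \ola} Y(a) > \mu \mid U=u, H_s=h]\, p(u \mid H_s=h)\, du.
\end{equation*}
By ignorability with respect to $(H_s, U)$, the potential outcomes for the untried actions are independent of the subsequent action sequence given $(H_s, U)$, so the integrand equals $\Pr[\max_{a \in \ola} Y(a) > \mu \mid U=u, H_s=h, \olA_{s+1:k}=\ola]$. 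Combining consistency with stationarity (Assumptions~\ref{asmp:consistency}--\ref{asmp:stationarity}), under this conditioning event $Y_r = Y(a_r)$ for each $r \in \{s+1,\dots,k\}$, so the integrand further equals $\Pr[\max_{r=s+1}^k Y_r > \mu \mid U=u, H_s=h, \olA_{s+1:k}=\ola]$.

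Next I would trade the reference measure $p(u \mid H_s=h)$ for $p(u \mid H_s=h, \olA_{s+1:k}=\ola)$ via Bayes' rule:
\begin{equation*}
p(u \mid H_s=h) = p(u \mid H_s=h, \olA_{s+1:k}=\ola)\cdot \frac{\Pr[\olA_{s+1:k}=\ola \mid H_s=h]}{\Pr[\olA_{s+1:k}=\ola \mid U=u, H_s=h]}.
\end{equation*}
The $\alpha$-bounded propensity sensitivity (Assumption~\ref{asmp:app_sens}) upper-bounds this ratio by $\alpha$ pointwise in $u$. Substituting back into the integral and then recombining with the (reweighted) conditional distribution of $U$ under the observational law yields
\begin{equation*}
\Pr[\max_{a \in \ola} Y(a) > \mu \mid H_s=h] \;\leq\; \alpha \cdot \Pr[\max_{r=s+1}^k Y_r > \mu \mid \olA_{s+1:k}=\ola, H_s=h].
\end{equation*}
Invoking the hypothesis that the right-hand conditional probability is at most $\delta/\alpha$ completes the proof.

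The main obstacle I anticipate is bookkeeping rather than substantive: I must be careful that the ignorability invoked is strong enough to handle the joint event $\max_{a \in \ola} Y(a) > \mu$ given the full action sequence $\olA_{s+1:k}$, not merely a single-step version of ignorability. Since Assumption~\ref{asmp:ignorability} is stated for each $s$ and the potential outcomes are stationary under Assumption~\ref{asmp:stationarity}, I would either argue sequentially (using that at each step $r$ the outcome $Y(a_r)=Y_r$ is independent of $A_r$ given $H_{r-1},U$) or, more cleanly, observe that the joint conditional law of the potential outcome vector $\{Y(a)\}_{a \in \ola}$ given $(H_s,U)$ is unchanged by further conditioning on $\olA_{s+1:k}$. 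Everything else is a straightforward application of Bayes' rule and the sensitivity inequality \eqref{eq:sensitivity}.
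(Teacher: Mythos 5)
Your proposal is correct and follows essentially the same route as the paper's proof: marginalize over $U$, use ignorability with respect to $(H_s,U)$ (together with consistency/stationarity) to equate the $u$-conditional interventional and observational probabilities, flip the posterior ratio $p(u\mid h)/p(u\mid \ola,h)$ into the propensity ratio via Bayes' rule, and invoke only the upper end of the sensitivity bound. The only cosmetic difference is that you apply the change of measure directly to get the multiplicative inequality $\Pr[\max_{a\in\ola} Y(a)>\mu\mid h]\leq \alpha\,\Pr[\max_r Y_r>\mu\mid \ola,h]$, whereas the paper bounds the difference of the two quantities by a factor $(\alpha-1)$ times the observational probability — the same argument in slightly different packaging.
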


\begin{proof}
We have by definition, where $\oly > \mu$ applies element-wise, 
\begin{align*}
& \Pr[\max_{a\in \ola} Y(a) > \mu \mid H=h] = \sum_{\oly : \oly > \mu} \Pr[\olY(\ola) = \oly \mid H=h] \\
& \Pr[\max_{i} Y_i > \mu \mid \olA = \ola, H=h] = \sum_{\oly : \oly > \mu} \Pr[\olY = \oly \mid \olA=\ola, H=h]
\end{align*}
Then, marginalizing over the unobserved confounder $U$ and conditioning on $H$, 
\begin{align*}
\Pr[\max_{a\in \ola} Y(a) > \mu \mid H=h] 
& = \sum_{\substack{\oly : \oly > \mu \\ u \in \cU}} \Pr[\olY(\ola) = \oly \mid h, U=u]p(U=u \mid h) \\
& = \sum_{\substack{\oly : \oly > \mu \\ u \in \cU}} \Pr[\olY = \oly \mid h, u, \ola]p(u \mid h)
\end{align*}
where the last equality follows from ignorability w.r.t. $H, U$. Applying the same steps to $\Pr[\max_{i} Y_i > \mu \mid \olA = \ola]$, we get 
$$
\Pr[\max_{i} Y_i > \mu \mid h, \olA = \ola] = \sum_{\substack{\oly : \oly > \mu \\ u \in \cU}} \Pr[\olY = \oly \mid h, u, \ola]p(u \mid \ola, h)
$$
We find that 
{
\allowdisplaybreaks
\begin{align*}
& \Pr[\max_{a\in \ola} Y(a) > \mu \mid h] - \Pr[\max_{i} Y_i > \mu \mid h, \olA = \ola]  \\
& = \sum_{\substack{\oly : \oly > \mu \\ u \in \cU}} \Pr[\oly \mid h, u, \ola]\left(p(u \mid h) - p(u \mid \ola, h) \right)  \\
& = \sum_{\substack{\oly : \oly > \mu \\ u \in \cU}} \Pr[\oly \mid h, u, \ola]p(u \mid \ola, h)\left(\frac{p(u \mid h)}{p(u \mid \ola, h)} - 1 \right)   = (*)\\
\end{align*}
By Bayes rule, we have
$$
\frac{p(u \mid h)}{p(u \mid \ola, h)} = \frac{p(u \mid h)p(\ola \mid h)}{p(\ola \mid u, h)p(u \mid h)} = \frac{p(\ola \mid h)}{p(\ola \mid u, h)}
$$
and so, 
\begin{align*}
(*) & = \sum_{\substack{\oly : \oly > \mu \\ u \in \cU}} \Pr[\oly \mid h, u, \ola]p(u \mid \ola, h)\left(\frac{p(\ola \mid h)}{p(\ola \mid u, h)} -1 \right) 
\end{align*}
}
The result follows immediately from our Assumption~\ref{asmp:app_sens}, that $\frac{1}{\alpha} \leq \frac{p(\ola \mid h)}{p(\ola \mid u, h)} \leq \alpha$. In fact, only the upper bound is needed. 
\end{proof}

%
% Correctness proof.
%
\subsection{Proof of Theorem~\ref{thm:correctness} (Correctness of dynamic programming)}
\label{app:correctness}

\begin{thmthm}[Theorem~\ref{thm:correctness} restated] 
Recall that $H_0 = (X, \emptyset, \emptyset)$. The policy maximizing \eqref{eq:qfunc}, $\pi(h) = \argmax_{a} Q(h,a)$,
is an optimal solution to~\eqref{eq:mainprob} and its expected search time is $\E[T] = \E_X[-V(H_0)]$.%
\end{thmthm}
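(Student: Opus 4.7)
The plan is to treat the construction in (\ref{eq:qfunc})--(\ref{eq:rewardfunc}) as a finite-horizon MDP whose optimal value coincides with the negative of the objective in (\ref{eq:mainprob}), so that standard Bellman optimality for finite MDPs yields both claims at once. The horizon is bounded: under Assumption~\ref{asmp:stationarity} we restrict to non-repeating policies, hence any trajectory terminates in at most $k=|\cA|$ steps (either because $\stop{}$ is chosen or because all actions have been tried). This means the $Q$-recursion (\ref{eq:qfunc}) grounds out and $V$ is well-defined by backward induction over $|h|=k, k-1, \ldots, 0$.

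The first step is to characterize the feasible set. A policy $\pi$ with $\E_{p_\pi}[\sum_t r_{\epsilon,\delta,\alpha}(H_t,A_t)]>-\infty$ must almost surely avoid the event $\{A_t=\stop{},\ \gamma_{\epsilon,\delta,\alpha}(H_t)=0\}$, for otherwise the $-\infty$ reward is incurred with positive probability. Conversely, any $\pi$ that only stops on histories where $\gamma_{\epsilon,\delta,\alpha}(h)=1$ satisfies the near-optimality constraint of (\ref{eq:mainprob}) by the definition of $\gamma$ and the identification of $\rho$ from Theorem~\ref{thm:identifiability} (and, when $\alpha>1$, Theorem~\ref{thm:nonignorable}). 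Moreover Assumption~\ref{asmp:positivity} guarantees that at least one such policy exists, so the maximum of $Q$ at every reachable $h$ is attained by a non-$\stop{}$ action whenever $\gamma(h)=0$, making the argmax policy automatically feasible.

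The second step is to show the objective equivalence. For any feasible policy, each non-terminal step contributes reward $-1$ and the terminal $\stop{}$ contributes $0$, so telescoping gives
\begin{equation*}
\sum_{t} r_{\epsilon,\delta,\alpha}(H_t,A_t) \;=\; -T \quad \text{almost surely under } p_\pi.
\end{equation*}
Taking expectations, $\E_{p_\pi}[\sum_t r] = -\E_{p_\pi}[T]$. Thus maximizing expected cumulative reward over feasible policies is exactly minimizing $\E_{p_\pi}[T]$ subject to the near-optimality constraint, i.e.\ problem (\ref{eq:mainprob}).

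The final step invokes Bellman optimality. Because the MDP has a finite state space (histories of length at most $k$), a finite action set $\cAstop$, and a bounded horizon, the deterministic policy $\pi(h)=\argmax_a Q(h,a)$ defined from the fixed-point solution of (\ref{eq:qfunc}) is optimal among all (history-dependent) policies for the cumulative-reward criterion, and its value from the initial state is $V(H_0)$. Combining with Step~2, $\E_X[V(H_0)] = -\E_{p_{\pi}}[T]$, which rearranges to $\E_{p_\pi}[T]=\E_X[-V(H_0)]$ and simultaneously certifies optimality for (\ref{eq:mainprob}). The main subtlety I expect is keeping the $-\infty$ reward bookkeeping clean: one has to argue that the Bellman recursion is well-posed despite the extended-real reward, which is handled by observing that on any reachable history the max in (\ref{eq:qfunc}) is always attained by some action with finite $Q$-value (the non-$\stop{}$ branch), so $V$ is real-valued at every state the optimal policy actually visits.
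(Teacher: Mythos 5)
Your argument is correct and follows essentially the same route as the paper's own (sketch) proof: feasibility is enforced by the $-\infty$ reward for stopping when $\gamma_{\epsilon,\delta,\alpha}(h)=0$, the per-step $-1$ rewards telescope so the cumulative return equals $-T$, and Bellman optimality over the finite-horizon, finite-state history MDP then gives both optimality and $\E_{p_\pi}[T]=\E_X[-V(H_0)]$; you simply fill in details (bounded horizon, well-posedness despite the extended-real reward) that the paper leaves implicit. One small quibble: the existence of a feasible policy follows from exhausting all $k$ actions (after which $\rho=0$), not from Assumption~\ref{asmp:positivity}, which concerns identification of $\rho$ rather than feasibility, but this does not affect the argument.
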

\begin{proofsketch}
Recall that 
\begin{equation}%
\gamma_{\epsilon, \delta, \alpha}(h) \coloneqq \mathds{1}[\Pr[\max_{a' \not\in h} Y(a') > \mu(h) + \epsilon \mid H=h ] < \delta/\alpha ]
 \end{equation}
\begin{equation}
\begin{aligned}
& Q(h, a) = r(h,a) + \mathds{1}[a\neq\stop{}]\sum_{y \in \cY} p(Y(a)=y \mid h) \max_{a'\in \cAstop}Q(h \cup \{(a, y)\},a')~,
\end{aligned}%
\label{eq:qfunc_app}%
\end{equation}
\begin{equation}
r_{\epsilon, \delta, \alpha}(h,a) = \left\{
\begin{array}{ll}
-\infty, & \text{ if } a=\stop{}, \gamma_{\epsilon, \delta, \alpha}(h)=0 \\
0, & \text{ if } a=\stop{}, \gamma_{\epsilon, \delta, \alpha}(h)=1 \\
-1, & \text{ if } a\neq\stop{}%, \gamma(h)=1 
\end{array}
\right.
~.%
\label{eq:rewardfunc_app}%
\end{equation}
and $V(h) = \max_{a,c} Q(h, a)$. 

By definition, any policy that achieves a finite expected reward $\E_{H_0}[V(H_0)]$ satisfies the stopping criterion, and is therefore a \emph{feasible solution} to \eqref{eq:mainprob}. Furthermore, any time search is terminated ($a=\stop{}$ or $\cA_{-h}=\emptyset$), the expected sum of rewards for a sequence is equal to minus the number of steps spent until the sequence terminates. The sequence is optimal if it terminates as soon as an $\epsilon, \delta$-optimal treatment is found. Thus, a policy with finite expected return that maximizes $V(H_0) = \max_{a} Q(H_0, a)$ is an optimally efficient search policy for effective treatments. 
\end{proofsketch}

\subsection{Approximation ratio of greedy algorithms}
\label{app:greedy}

\iffalse
We can state the following result about greedy approximations. \fxfatal{polish this section}
\begin{thmthm}\label{thm:greedy}
Given is a model for $p(\olY(\olA), H)$. Let $\delta=0, \epsilon=0$. Assume that, given baseline covariates $X$, $Z$ deterministically determines $\olY(\olA)$. Let $\cZ_X = \{z \in \cZ \colon p(Z\mid X)>0\}$. Then, there exists a greedy policy with approximation ratio
$$
\E_{\pi}[T\mid X] \leq \E_{\pi^*}[T \mid X]O(\log|\cZ_X|)
$$
\end{thmthm}
Theorem~\ref{thm:greedy} is proved by reduction to active learning in Appendix~\ref{app:greedy}. In the worst case, $|\cZ_X| \in \Theta(|\cY|^{|\cA|})$, if no structure is imposed on potential outcomes. However, if $Z$ represents unique categories of patients, $|\cZ_X|$ may be significantly smaller.
\fi

The active learning problem concerns identification of a hypothesis $g \in \cG$ by iteratively performing tests suggested by a policy~\citep{guillory2009average}.  The problem then amounts to finding a policy $\pi$ which selects tests $\olA = A_1, ..., A_T$, the results $Y(A_1), ..., Y(A_T)$ of which identify $g$ with probability 1, $p(G=g \mid Y(A_1), ..., Y(A_T))=1$. We consider now the case were a prior distribution $p(G, Y(1), ..., Y(k))$ is known, as studied by~\citep{guillory2009average}. A sequence of tests $\olA$ which identifies $g$ is associated with a cost $c(\olA, G)$, and the objective is to find $\pi$ which minimizes the expected cost over $p$,
$$
c(\pi) = \E_{G, A\sim \pi}[c(\olA, G)]~.    
$$
We have the following result from the literature. 
\begin{thmthm}[Adapted from Theorem 4 of \citep{kosaraju1999optimal}] There exists a greedy policy $\pi$ such that for any $p$ such that ${Y(a) : a \in \cA}$ are deterministic given $G$, 
$$
c(\pi) \leq c(\pi^*)O(\log |\cG|)
$$
where $\pi^* = \argmin_{\pi'} c(\pi')$~.
\end{thmthm}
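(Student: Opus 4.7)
The plan is to exhibit a concrete greedy policy and reduce its analysis to the classical decision-tree testing result of Kosaraju et al. Because $\{Y(a) : a \in \cA\}$ is deterministic given $G$, every hypothesis $g \in \cG$ is associated with a fixed outcome vector $(Y_g(1), \ldots, Y_g(k))$. After observing history $h = ((A_1, Y_1), \ldots, (A_t, Y_t))$, the posterior is supported on the \emph{version space} $V(h) = \{g \in \cG : Y_g(A_i) = Y_i \text{ for all } i \leq t\}$, and identification is achieved exactly when $|V(h)| = 1$.

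The greedy policy I propose is the \emph{generalized splitting} policy: at every history $h$, pick
\[
\pi(h) = \argmax_{a \in \cA} \E_{Y(a) \mid h} \bigl[\Pr(G \in V(h)) - \Pr(G \in V(h \cup \{(a, Y(a))\}))\bigr],
\]
that is, the test whose outcome in expectation eliminates the largest prior-weighted mass of still-consistent hypotheses, and terminate once $V(h)$ is a singleton. Because each real hypothesis in $\cG$ must be distinguished, and because the policy is deterministic given $h$, the resulting execution is a decision tree over $\cG$.

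To prove the approximation ratio, I will first verify that the utility $f(h) = 1 - \Pr(G \in V(h))$ is adaptively monotone and adaptively submodular in the sense of Golovin and Krause. Both properties follow from (i) the fact that every new observation can only shrink the version space and (ii) the observation that under determinism the marginal drop in posterior mass caused by a test is a coverage-type quantity, whose diminishing-returns property is standard. Plugging $f$ into the adaptive greedy approximation theorem then yields $c(\pi) \leq c(\pi^*)(\ln(1/p_{\min}) + 1)$ where $p_{\min} = \min_g \Pr(G = g)$. An equivalent route is to invoke Theorem~4 of Kosaraju et al.\ directly: their splitting algorithm coincides with $\pi$, and their explicit potential-function argument delivers the same $O(\log |\cG|)$ ratio without passing through adaptive submodularity.

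The main obstacle is that Kosaraju et al.\ state their theorem for binary tests and essentially uniform priors, while our setting admits arbitrary priors and multi-valued outcomes $Y(a) \in \cY$. The prior can be handled by bounding $\log(1/p_{\min}) \leq \log(|\cG|/\min_g p(g))$, absorbing the dependence on the prior into the stated $O(\log|\cG|)$ factor (or explicitly carrying $\log(1/p_{\min})$ through the argument). Multi-valued outcomes can be reduced to binary ones by simulating each $|\cY|$-ary test with at most $\lceil \log_2 |\cY| \rceil$ virtual sub-tests, which changes the approximation ratio only by constants hidden in the $O(\cdot)$. Once these routine bookkeeping steps are in place, the stated $O(\log|\cG|)$ guarantee follows immediately from the cited theorem.
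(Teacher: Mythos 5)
You should first note that the paper itself contains no proof of this statement: it is stated as an adaptation of Theorem~4 of Kosaraju et al.\ (1999) and used as a cited black box, so the only question is whether your reconstruction is sound. It has a genuine gap exactly at the step that produces the stated bound. The adaptive-submodularity route (Golovin--Krause applied to the version-space mass $f(h)=1-\Pr(G\in V(h))$) is fine as far as it goes, but it yields $c(\pi)\le c(\pi^*)\,(\ln(1/p_{\min})+1)$, and for an arbitrary prior $\log(1/p_{\min})$ is \emph{not} $O(\log|\cG|)$. Your proposed fix is vacuous: $\log\bigl(|\cG|/\min_g p(g)\bigr)=\log|\cG|+\log(1/p_{\min})$, so the inequality you invoke gives nothing, and the prior dependence cannot simply be ``absorbed'' into $O(\log|\cG|)$ --- a hypothesis of mass, say, $2^{-|\cG|^{2}}$ defeats the claim for the plain splitting greedy. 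Removing this weight dependence is precisely the content of Kosaraju et al.'s Theorem~4: their guarantee is achieved by a \emph{modified} greedy (roughly, one that handles very light hypotheses and a dominant heavy hypothesis specially, e.g.\ by rounding small masses so that effectively $p_{\min}\ge 1/\mathrm{poly}(|\cG|)$ while perturbing the optimum by only a constant factor), so your statement that ``their splitting algorithm coincides with $\pi$'' is not accurate. Without either carrying out that modification (and arguing it costs only a constant factor) or restricting the class of priors, your argument proves the weaker $O(\log(1/p_{\min}))$ ratio, not the theorem as stated, which quantifies over all $p$.

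A secondary, smaller issue: the reduction of $|\cY|$-ary tests to $\lceil\log_2|\cY|\rceil$ binary sub-tests is not free. It can inflate the ratio by a $\log|\cY|$ factor rather than a constant, and the virtual sub-tests are not actions available in the original problem (performing a test reveals all of $Y(a)$ at unit cost), so the simulated binary greedy's trajectory need not correspond to a realizable policy with matching cost accounting. It is cleaner to run the version-space/greedy-splitting analysis directly with multi-valued outcomes, as the adaptive-submodularity framework and the average-cost active-learning results already permit; then no reduction is needed at all.
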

This bound is matched by a lower bound by~\citep{chakaravarthy2007decision} which states that it is NP-hard to achieve an approximation ratio better than $o(\log |\cG|)$.

In the setting with $\delta=0$, our problem may posed as active learning where the hypothesis corresponds to the maximum value of potential outcomes, $G = \min\{g \in \cY \colon p(\max_{a} Y(a) > g) \leq 0\}$. Once this quantity is identified, the stopping criterion may be determined immediately. However, under this hypothesis, ${Y(a)}$ are not deterministic given $G$ and the results above do not apply. \citet{golovin2010near} study the noisy case under the assumption that non-determinism in $Y(a)$ is controlled by a noise variable $\Theta$, i.e., that $\olY(\cA) = f(G, \Theta)$ for some deterministic function $f$.

\begin{thmthm}[Adapted from Theorem 3 in \citep{golovin2010near} with uniform costs] Fix hypotheses $\cG$, tests $\cA$ and outcomes in $\cY$, Fix a prior $p(G, \Theta)$ and a function $f \colon G \times \textnormal{supp}(\Theta) \rightarrow \cY^{|\cA|}$ which define the
probabilistic noise model. Let $c(\pi)$ denote the expected cost of $\pi$ incurs to identify which equivalence class $G$ the outcome vector $\olY(A_T)$ belongs to. Let $\pi^*$ denote the policy
minimizing $c(\cdot)$, and let $\pi$ denote the adaptive policy implemented by the greedy algorithm EC2. Then, 
$$
c(\pi) \leq c(\pi^*)O(\log |\cA| + \log |\textnormal{supp}(\Theta)|)~.
$$
\end{thmthm}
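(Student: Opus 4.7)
The plan is to derive the statement directly from Theorem~3 of~\citep{golovin2010near} by specializing their result to the case where the cost of every test is equal to one. Their general theorem bounds the expected total cost of the greedy EC2 policy in a noisy object-identification problem by $O(\log|\cA| + \log|\textnormal{supp}(\Theta)|)$ times the expected cost of the optimal adaptive policy. Under uniform unit costs, this ``cost'' becomes precisely the expected number of tests performed, which is exactly the quantity $c(\pi)$ used in our statement.

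First, I would spell out the correspondence between the two settings: our actions $\cA$ play the role of their tests, the hypotheses $\cG$ together with $\Theta$ and the deterministic mapping $f$ reproduce the probabilistic noise model they assume over test outcomes, and identifying the equivalence class of $\olY(\cA)$ coincides with their notion of identifying a hypothesis class. Provided $p(G,\Theta)$ and $f$ jointly induce the same marginal over outcome vectors as our model $p(\olY(\cA))$, the two problems are the same Bayesian active-learning instance.

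Next, I would substitute $c(a)=1$ for every $a\in\cA$ into their bound and check that no hidden constant inside their $O(\cdot)$ depends on the cost scale. The only quantities that remain are the sizes of $\cA$ and $\textnormal{supp}(\Theta)$, which recovers the stated bound. A final verification is that the EC2 algorithm of~\citep{golovin2010near} is indeed an instance of the greedy scheme referenced in Section~\ref{sec:greedy}, so that the theorem applies to the algorithm we analyse rather than to a different greedy variant with incompatible guarantees.

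The main obstacle would be ensuring that the reduction from our causally structured outcome distribution to the factorization $\olY(\cA)=f(G,\Theta)$ is not vacuous: one can always take $\Theta$ to encode the entire outcome vector, in which case $|\textnormal{supp}(\Theta)|$ could be as large as $|\cY|^{|\cA|}$ and the bound becomes uninformative. The meaningful use of the theorem is when the moderator $Z$ (or an analogous latent structure in our causal model) has small support, so that $\Theta$ can be chosen with $|\textnormal{supp}(\Theta)|$ much smaller than the na\"ive outcome-vector cardinality. I would conclude by highlighting this structural condition as the price one pays for turning an NP-hard search problem into one with a logarithmic approximation guarantee.
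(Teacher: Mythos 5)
Your proposal matches the paper's own treatment: the paper does not prove this result independently but imports Theorem~3 of \citep{golovin2010near} specialized to uniform (unit) test costs, with exactly the correspondence you describe (actions as tests, $\olY(\cA)=f(G,\Theta)$ as the noise model, identification of the equivalence class of the outcome vector as the stopping event). Your closing caveat—that the bound is vacuous when $|\textnormal{supp}(\Theta)|$ is as large as $|\cY|^{|\cA|}$ and only informative when latent structure such as $Z$ keeps it small—is precisely the discussion the paper gives immediately after the statement.
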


In the case that all combinations of outcomes are feasible, $\log|\textnormal{supp}(\Theta)| = |\cA|\log |\cY|$ and the bound above is vacuous, since a trivial bound on the search time is $|\cA|$. When there is structure in potential outcomes, $\textnormal{supp}(\Theta)$ may be much smaller. For example, if the moderating variable $Z$ controls all uncertainty in $Y(a)$, given X, the bound reduces to $O(\log |\cZ_X|)$ where $\cZ_X = \{z \in \cZ \colon p(Z\mid X)>0\}$, which may be significantly smaller than $|\cA|\log |\cY|$.

\subsection{Model-free RL and CDP are not equivalent}
\label{app:non-equivalence}

\def\rmf{r^{\text{model-free}}}
Let $\max_{(\cdot, y) \in h} y$ represent the best outcome so far at history $h$, with $s=|h|$ and $\lambda > 0$ a parameter trading off early termination and high outcome. Now, consider the reward function $\rmf_\lambda(h,a)$ following history $h \in \cH$ defined below. 
\begin{equation}
\rmf_\lambda(h,a) = \left\{
\begin{array}{ll}
0, & a\neq \stop{} \\
\max_{(\cdot, y) \in h} y - \lambda|h|, & a = \stop{}.
\end{array}
\right.
\label{eq:modelfree_reward_app}
\end{equation}
and the policy maximizing the expected sum of rewards
\begin{equation}
\pi^{*, \text{model-free}}_\lambda = \argmax_{\pi} \E_{h,a\sim \pi}\left[\sum_{s=1}^k \rmf_\lambda(h_s, a_s) \right]~.
\label{eq:modelfree_problem_app}
\end{equation}
Now consider the greedy policy maximizing the Q-function defined by 
\begin{equation}
Q(h, a) = \E_{h' \mid h,a}[\rmf_\lambda(h,a) + \max_{a' \in \cA_{-h}\cup \{\stop\}} Q(h', a') \mid H_s=h, A_s=a]~.
\label{eq:model_free_q_app}
\end{equation}
For readers familiar with reinforcement learning, it is easy to see that policy maximizing $Q$ defined above also maximizes the expected sum of rewards given by \eqref{eq:modelfree_reward_app}.  Below, we prove that this algorithm does not in general solve \eqref{eq:mainprob}.

\begin{thmthm}\label{thm:modelfree}
There are instances of \eqref{eq:mainprob} (main problem), specified by a distribution $p$ and parameters $\epsilon, \delta$, such that the solutions to \eqref{eq:mainprob} and \eqref{eq:modelfree_problem_app} are distinct for every choice of $\lambda>0$.
\end{thmthm}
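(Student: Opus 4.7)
The plan is to prove Theorem~\ref{thm:modelfree} by exhibiting an explicit instance in which the unique CDP optimum cannot be realized as the maximizer of \eqref{eq:modelfree_problem_app} for any $\lambda>0$. The conceptual driver is the mismatch between the CDP feasibility criterion $\rho(h)\le\delta$, which depends only on the \emph{probability} that an untried action strictly improves on $\mu(h)$, and the model-free objective, which trades the \emph{expected magnitude} of improvement against the per-step cost $\lambda$. By making one action rarely deliver a very large gain while another frequently delivers a tiny gain, we can drive these two criteria apart across different contexts.

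Concretely, I would take $\cX=\{0,1\}$ with uniform prior, $\cA=\{1,2\}$, $\cY\supseteq\{0,10,11,100\}$, and set $\epsilon=0$, $\delta=0.4$. Let $Y(1)=10$ deterministically in both contexts. In $X=0$, let $Y(2)=100$ with probability $0.3$ and $Y(2)=0$ otherwise; in $X=1$, let $Y(2)=11$ with probability $0.5$ and $Y(2)=0$ otherwise. First I would verify the CDP optimum. In $X=0$, trying $A_1=1$ yields $\rho(h)=0.3<\delta$, so stopping is feasible and gives search length $1$, which any other feasible policy exceeds. In $X=1$, trying $A_1=1$ yields $\rho(h)=0.5>\delta$ and forces a second trial (length $2$), whereas starting with $A_1=2$ and continuing only if $Y_1=0$ gives expected length $1.5$. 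Hence the unique CDP optimum starts with action $1$ in $X=0$ and with action $2$ in $X=1$.

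Next I would parameterize the model-free optimum by $\lambda$ through a finite case analysis over the depth-two decision trees. Closed-form expected returns give that in $X=0$, starting with action $2$ and conditionally continuing yields $37-1.7\lambda$, dominating starting with action $1$ and stopping (which yields $10-\lambda$) iff $\lambda<270/7$; in $X=1$, starting with action $2$ and conditionally continuing yields $10.5-1.5\lambda$, beaten by starting with action $1$ and stopping ($10-\lambda$) iff $\lambda>1$. Since $1<270/7$, no $\lambda>0$ can simultaneously make model-free prefer action $1$ in $X=0$ and action $2$ in $X=1$. Consequently any optimal model-free policy differs from the CDP optimum at the root in at least one context.

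The main obstacle is handling the boundary values $\lambda\in\{1,270/7\}$ where the model-free maximum is not unique. I would resolve this by observing that each boundary admits a tie in only one context, while a \emph{strict} inequality in the other context still rules out the CDP first action there; hence no tie-breaking rule can reproduce the CDP policy, and the two optimal policy sets remain disjoint for every $\lambda>0$.
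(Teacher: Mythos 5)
Your construction and conclusion are sound, and the overall strategy is the same as the paper's: exhibit a small explicit instance and run a finite case analysis over $\lambda$ showing that the \ndp{} root preference can never be made to coincide with the \cdp{} one. The paper does this with a single context-free instance (a deterministic ``safe'' action versus a half-half $\{0.5,1.0\}$ action, $\delta=0.5$), where \ndp{} prefers the risky, higher-mean action for \emph{every} $\lambda>0$, so no interval-intersection across contexts is needed; you instead use two contexts and intersect two $\lambda$-ranges. Two computational slips in your $X=0$ analysis are worth fixing, although neither hurts the conclusion: the value $37-1.7\lambda$ for starting with action $2$ is only valid for $\lambda\le 10$ (for $\lambda\ge 10$ the optimal continuation after $Y(2)=0$ is to stop, giving $30-\lambda$), and the competitor when starting with action $1$ is $\max(10-\lambda,\,37-2\lambda)$, not just $10-\lambda$. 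Correcting both shows that in $X=0$ the model-free optimum \emph{strictly} prefers action $2$ for every $\lambda>0$ (so your claimed tie at $\lambda=270/7$ never occurs), while every \cdp{}-optimal policy must start with action $1$ there (expected length $1$ versus $1.7$, with $\rho=0.3<\delta$ licensing immediate stopping). Hence the $X=0$ sub-instance alone already makes the solution sets of \eqref{eq:mainprob} and \eqref{eq:modelfree_problem_app} disjoint for all $\lambda>0$, which both repairs the slip and simplifies your argument --- and it is structurally the same mechanism (probability-of-improvement versus expected-magnitude-minus-$\lambda\cdot$time) that the paper's context-free example exploits.
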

\begin{proof}
Consider a context-less setting with two actions $\cA = \{a, b\}$ with the following potential outcomes: $p(Y(a)=1.0) = 1/2, p(Y(a)=0.5) = 1/2$ and $p(Y(b)=0.5+\epsilon) = 1$. In this scenario, having observed nothing, the probability that action $b$ yields a higher outcome than $a$ is 1/2. Hence, for $\delta=0.5$, CDP always prefers to start with action $b$ and end immediately. Now, consider NDL, which minimizes the expected return with the reward function,
$$
r(h, a) = \left\{
\begin{array}{ll}
0, & a\neq \stop{} \\
\max_{(\cdot, y) \in h} y - \lambda|h|, & a = \stop{}
\end{array}
\right.
$$
where $s$ indicates the stop action and $\max_{(\cdot, y) \in h} y$ represents the best outcome so far at history $h$ and $\lambda > 0$. The Q-function is in \eqref{eq:model_free_q_app}.
NDP computes this recursively and uses the policy which maximizes it. Under the version of this problem with $\epsilon < 0.25$, we can show that there is no $\lambda>0$ such that $Q(\emptyset, b) > Q(\emptyset, a)$.  We give the map of $Q$ below under this assumption.

\begin{table}[t]
    \centering
    \begin{tabular}{cccccc}
        \multicolumn{4}{c}{$h$} & $a$  & $Q(h, a)$ \\
         $A_1$ & $Y_1$ & $A_2$ & $Y_2$ & &  \\ \midrule
         a & 1.0 & -- & -- & \stop{} & $1.0-\lambda$ \\
         a & 0.5 & -- & -- & \stop{} & $0.5-\lambda$ \\
         b & $0.5+\epsilon$ & -- & -- & \stop{} & $0.5+\epsilon-\lambda$ \\
         a & 1.0 & b & $0.5+\epsilon$ & \stop{} & $1.0-2\lambda$ \\
         a & 0.5 & b & $0.5+\epsilon$ & \stop{} & $0.5+\epsilon-2\lambda$ \\
         a & 1.0 & -- & -- & b & $1.0-2\lambda$ \\
         a & 0.5 & -- & -- & b &
         $0.5+\epsilon-2\lambda$ \\
         b & $0.5+\epsilon$ & -- & -- & a & $\frac{(1.0-2\lambda) + (0.5+\epsilon-2\lambda)}{2}$ \\
         -- & -- & -- & -- & a &
         $\frac{(1.0-\lambda) + \max(0.5-\lambda, 0.5+\epsilon-2\lambda)}{2}$ \\
         -- & -- & -- & -- & b &
         $\max(0.5+\epsilon-\lambda, \frac{(1.0-2\lambda) + (0.5+\epsilon-2\lambda)}{2})$ \\
    \end{tabular}
    %\caption{We assume $\epsilon < 0.25$.}
    \label{tab:my_label}
\end{table}

For $\lambda > \epsilon$, $Q(\emptyset, a) = 0.75 - \lambda$ and $Q(\emptyset, b) = \max(0.5+\epsilon - \lambda, 0.75+\epsilon/2-2\lambda) < Q(\emptyset, a)$. For $0 <\lambda \leq \epsilon$, we have $Q(\emptyset, a) =  0.75-1.5\lambda + \epsilon/2 > Q(\emptyset, b)$ by the assumption $\epsilon < 0.25$. Hence, NDL would, for any $\lambda$ prefer action $a$. However, for $\delta = 0.5$, CDP would prefer action $b$. Thus, for $\delta=0.5$, there is no $\lambda$ which make these equivalent. 
\end{proof}

\section{Historical smoothing and function approximation}
\label{app:smoothing}
The number of possible combinations (histories) grows exponentially with the number of actions, $k = |\cA|$. As a result, it is very probably that certain combinations of histories $h$ and actions $a$ are never observed in practice. We consider two solutions to this: historical smoothing and function approximation. Historical smoothing is used in the discrete case

by estimating the probability $p(Y(a)=y \mid H_{s-1}=h)$ using a weighted average of outcomes for observations $(h, a, y)$ and observations for subsequences $(h', a, y)$ where $h' \subseteq h$. Function approximation imputes $\hat{p}(Y(a)=y \mid H_{s-1}=h)$ using a regression estimator trained on all observations. We expand on these approaches in Appendix~\ref{app:smoothing}.

\subsection{Historical smoothing}
Consider estimating the function $p(Y(a)\mid H=h)$ in the discrete case. Under the stationarity assumption, Assumption~\ref{asmp:stationarity}, it is sufficient to represent the history in terms of indicators for tried treatments, $\{B_a \colon a\in \cA\}$ such that $B_a \in \{0,1\}$, and observed outcomes of these actions. Hence, $p(Y(a)\mid H=h)$ may be represented by a table of dimensions $|\cY| \times (\{0,1\} \times |\cY|)
^{|\cA|}$. Clearly, even under this representation, the number of possible histories grows exponentially with the number of actions. For this reason, for moderate to high numbers of actions, it will be unlikely to observe samples for each cell of this table. 

To obtain an estimate even in cases with high dimensionality, we use historical smoothing based on a  \textit{prior}. In the discrete case, we may view the  distribution of the outcomes $Y(a)$ for a treatment $a$ following history $h$ as a categorical distribution. We impose a Dirichlet prior on this distribution and use the posterior distribution in estimating the stopping statistic $\rho$ and in policy optimization. A Dirichlet prior for $p(Y(a)\mid H=h)$ is specified by pseudo-counts $\beta_1(a,h), ..., \beta_{|\cY|}(a,h)$. The posterior parameters are then
$\frac{n_y(a,h) + \beta_y(a,h)}{\sum_{y'} n_{y'}(a,h) + \beta_{y'}(a,h)}$, where $n_y(a,h)$ is equal to the number of samples where $Y(a)=y$ following history $h$. In this work, we consider two different priors $\beta$.

\paragraph{Historical prior (kernel smoothing)}
The historical prior assumes that the conditional  outcome distribution changes slowly with the number of past observations. The prior itself is a weighted average of the outcome probability at all possible previous histories,
\begin{equation}
    \label{eq:historical_prior}
    \beta_y(a,h) = \sum_{h' \subset h} w(h,h') \cdot \hat{p}(Y(a)\mid H=h'),
\end{equation}{}
where the weight of the probability given by a shorter history is determined by its similarity to $h$,
\begin{equation}
    \label{eq:historical_weight}
    w(h, h') = \frac{e^{-(|h|-|h'| - 1)^2}}{ |h| \cdot 2^{|h-h_i| - 1}}~.
\end{equation}

\paragraph{Uninformed prior}
The uninformed prior assigns a small uniform value to all $\beta$.

\subsection{Function approximation}
Observations for the $i$th subject are denoted $x^{(i)}, a^{(i)}_t, y^{(i)}_t, \ola^{(i)}_s$. To use function approximation, we fit a single function $f$, acting on a representation of history $\phi(h)$ to estimate $p(Y(a) \mid H=h)$ by solving the following problem,
\begin{equation}
\label{eq:functionappr}
\min_{f\in \cF}  \sum_{i=1}^n \sum_{s=1}^{t_i} L(f(h_s^{(i)}, a_s^{(i)}), y_s^{(i)})~,
\end{equation}
for an appropriately chosen function class $\cF$ and loss function $L$. In the discrete settings considered in the paper, we use the logistic (cross-entropy) loss which leaves the solution to \eqref{eq:functionappr} a probabilistic classifier, or estimate of $p(Y(a)\mid H=h)$ for all $a,h$.

\section{Additional experimental results}
Below follow additional details and results from the experiments. All experiments were implemented in Python and run on standard laptop computers. Each experiment on the synthetic DGP took less than a handful of hours to finish. For the antibiotics experiment, the overall time to produce the results for all values of $\delta$ was 2 days.

\subsection{Synthetic data generating process}
\label{app:synthetic}
We describe the datagenerating process (DGP) for the synthetic dataset used in Figure~\ref{fig:synthexp} and additional results described below. Let $o_a(h) = \mathds{1}[a \in h]$ and $o(h) = [o_1(h), ..., o_k(h)]^\top$. The moderator $Z \in \{0,1\}^d$ and covariates $X \in \{0,1\}^v$ are drawn according to 
\begin{enumerate}
\item $Z \sim \mbox{Bernoulli}(\alpha)$
\item $X \sim \mbox{Bernoulli}(\max(\min(\beta Z, 0.98), 0.02))$.
\end{enumerate}
given a set of parameters $\alpha \in [0,1]^d, \beta \in [0,1]^{v \times d}$ drawn element-wise uniformly at random.

The action \stop{} is drawn at any point following the first treatment with probability $p_{\stop{}} = 0.1$. To emulate a closer-to-realistic policy, if not stopped, the next action is drawn according to a categorical distribution with probabilities determined by the variable $X$ and the dissimilarity of the new action $A$ to previous actions in $H$. Outcomes are drawn according to a categorical distribution with parameters given by the pdf of a Cauchy random variable, itself with parameters depending on the variables $X$, $Z$ and $A$. For a full description of the data generating distribution, see Algorithm
~\ref{alg:synth}.

\begin{algorithm}[t!]
\SetAlgoLined
\textbf{Input: } Weight parameter $w_x$ (default value 1)\\
\textbf{Input: } Number of outcomes $n_y$ \\
\textbf{Input: } Uniform stopping probability $p_{\stop}$ \\
~\\
Generating parameters: \\
$u_1, u_2 \sim \mathcal{N}(0_{k \times (1+v+d)}, 1)$ \\
$u_2 \gets |u_2|$ \\
\For{$i\gets 2$ \KwTo $v+1$}{
    $u_1(\cdot,i) \gets u_1(\cdot,i) \cdot w_x$ \\
    $u_2(\cdot,i) \gets u_2(\cdot,i) \cdot w_x$ \\
}
$\eta \sim \mathcal{N}(0_{k \times (1+v+k)}, 1)$ \\
\For{$a\gets 1$ \KwTo $k$}{
    $u_1^-(a) \gets \sum_{i=1}^{1+v+d} \mathds{1}[u_1^-(a,i) < 0]u_1^-(a,i)$ \\
    $u_2^-(a) \gets \sum_{i=1}^{1+v+d} \mathds{1}[u_2^-(a,i) < 0]u_2^-(a,i)$ \\
    $u_1^+(a) \gets \sum_{i=1}^{1+v+d} \mathds{1}[u_1^-(a,i) > 0]u_1^-(a,i)$ \\
    $u_2^+(a) \gets \sum_{i=1}^{1+v+d} \mathds{1}[u_2^-(a,i) > 0]u_2^-(a,i)$ \\
}
~\\
Generating distribution of actions: \\
$p(A=\stop{}) = p_\stop$ \\
\For{$a, a' \in \{1, ..., k\}$}{
    $\Delta(a,a') \gets \|u_1(a) - u_1(a')\|_2^2 + \|u_2(a) - u_2(a')\|_2^2$
}
\For{$h \in \cH$}{   
    $v = [1 ; x ; o(h)]$ \\
    \For{$a \in \{1, ..., k\}$}{   
        $\tilde{p}(a) \gets e^{\eta(a,\cdot)^\top v}$
        \For{$a' \in h$}{   
            $\tilde{p}(a) \gets \tilde{p}(a)\cdot \Delta(a,a')$
        }
    }
    \For{$a \in \{1, ..., k\}$}{   
        $p(A=a \mid h, A\neq \stop{}) \gets \frac{\tilde{p}(a)}{\sum_{a\in \{1,...,k\}} \tilde{p}(a)}$
    }
}
~\\
Generating distribution of potential outcomes: \\
\For{$x \in \cX, z \in \cZ$}{   
    \For{$a\gets 1$ \KwTo $k$}{
        $v \gets [1 ; x ; z]$ \\
        $y_0(a) \gets u_1(a,\cdot)^\top v$ \\
        $y_0(a) \gets \frac{(n_y - 1)(y_0(a) - u_1^-(a))}{(u_1^+(a) - u_1^-(a))}$ \\
        $\gamma(a) \gets u_1(a,\cdot)^\top v$ \\
        $\gamma(a) \gets \frac{(\gamma(a) - u_2^-(a))}{(u_2^+(a) - u_2^-(a))}$ \\
        \For{$y\gets 1$ \KwTo $n_y$}{
            $\tilde{p}(a,y) \gets f_{\textnormal{cauchy}}(y; y_0(a), \gamma(a))$
        }
        \For{$y\gets 1$ \KwTo $n_y$}{
            $p(Y(a)=y \mid x, z) \gets \frac{\tilde{p}(a,y)}{\sum_{y=1}^{n_y} \tilde{p}(a,y)}$
        }
    }
}
\caption{Generating distribution of actions and potential outcomes\label{alg:synth}}
\end{algorithm}

\subsubsection{Additional results for the synthetic DGP}
We present additional results for \cdp{}, \cg{} and \ndp{} applied to the synthetic DGP described above. Unless otherwise specified, $\delta=0.4, \epsilon=0, \lambda=0.35$ and CDP and CG use the upper bound approximation of the stopping criterion described in Appendix~\ref{app:bound} with historical smoothing (\_H), as described in Appendix~\ref{app:smoothing}.

In Figure~\ref{fig:data_set_size_variance}, we illustrate the mean efficacy and search time (number of trials) as a function dataset size, varying logarithmically from $n=50$ to $n=75 000$ samples. We include the variance across $m$ random seeds for the experiment, $\hat{\sigma}^{2}=\frac{1}{m-1} \sum_{i=1}^{m}\left(x_{i}-\bar{x}\right)^{2}$. This Figure is a different view of Figure~\ref{fig:synthexp}, where we clearly see that the efficacy for most algorithms go up as data set size grows and search time decreases. For \ndp{}, as noted in Section~\ref{sec:experiments}, we see the opposite trend, however.

Figure~\ref{fig:deltasweep_tve} shows the trade-off between search time (number of trials) for different algorithms and 40 different values of $\delta \in [0,1]$ with $\lambda=\delta$ for $n=15 000$ samples, in the setting corresponding to Figure~\ref{fig:synthexp}. In Figure~\ref{fig:bounds_comparison}, we give the corresponding comparison for using lower or upper bounds in the estimation of the stopping criterion $\rho$, as described in Appendix
~\ref{app:bound}. Here, \_U refers to the upper bound, \_L to the lower bound and \_E is ``exact'' estimator, i.e. the empirical estimator of the exact expression for the stopping criterion, $\rho$. At first glance, the output of the different algorithms using different bounds appear very similar. However, as we see in Figure~\ref{fig:meaning_of_delta}, the  trade-off induced by a specific value of $\delta$ varies greatly depending on the estimation strategy. This is discussed also in Section~\ref{sec:experiments}, where we note that the policy learned by \ndp{} is very sensitive to the setting of $\lambda$. 

\begin{figure}
    \centering
    \begin{subfigure}{0.48\textwidth}
        \centering
        \includegraphics[width=\textwidth]{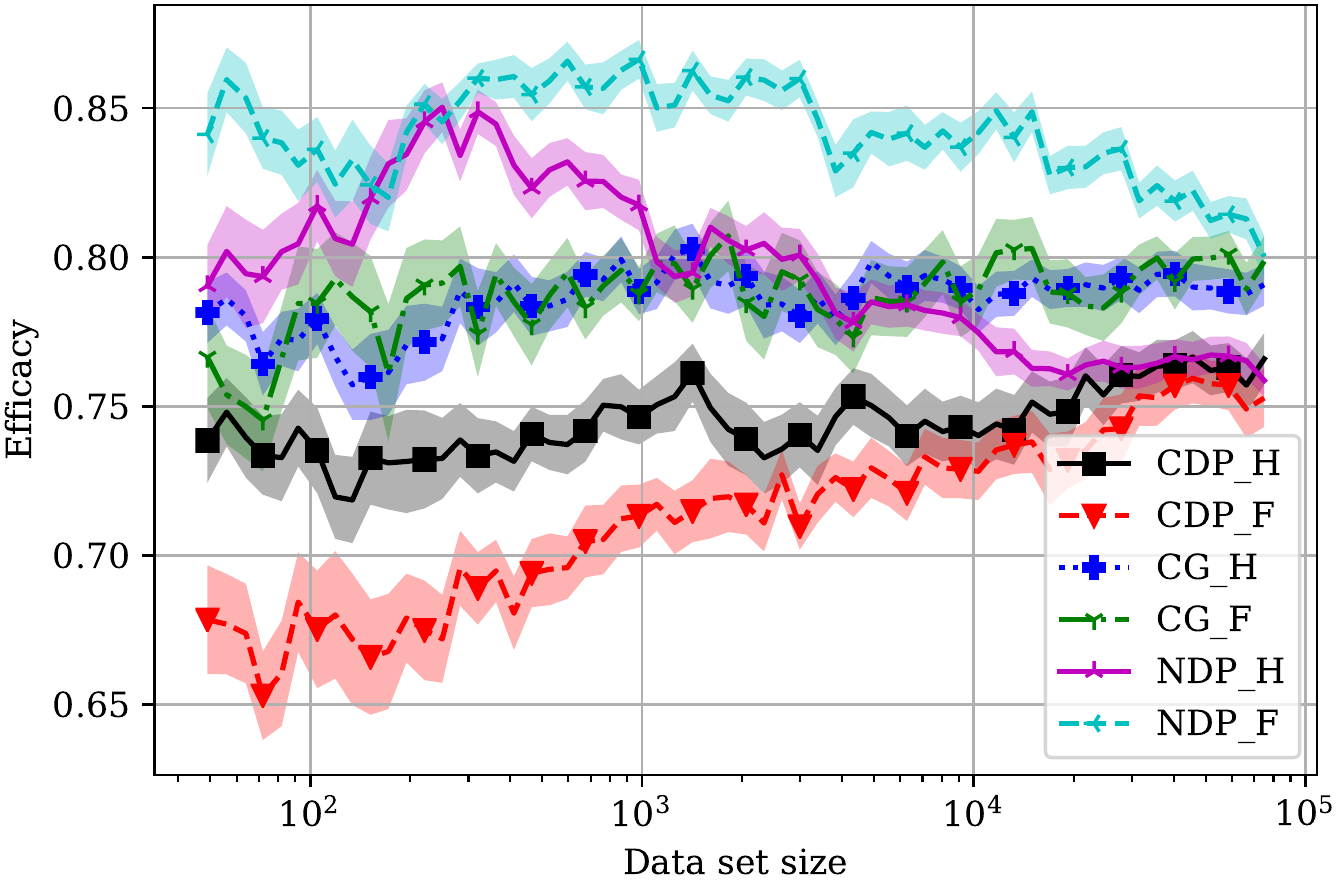}
        \caption{}
        \label{fig:efficacy_vs_data_set_size}
    \end{subfigure}
    \begin{subfigure}{0.48\textwidth}
        \centering
        \includegraphics[width=\textwidth]{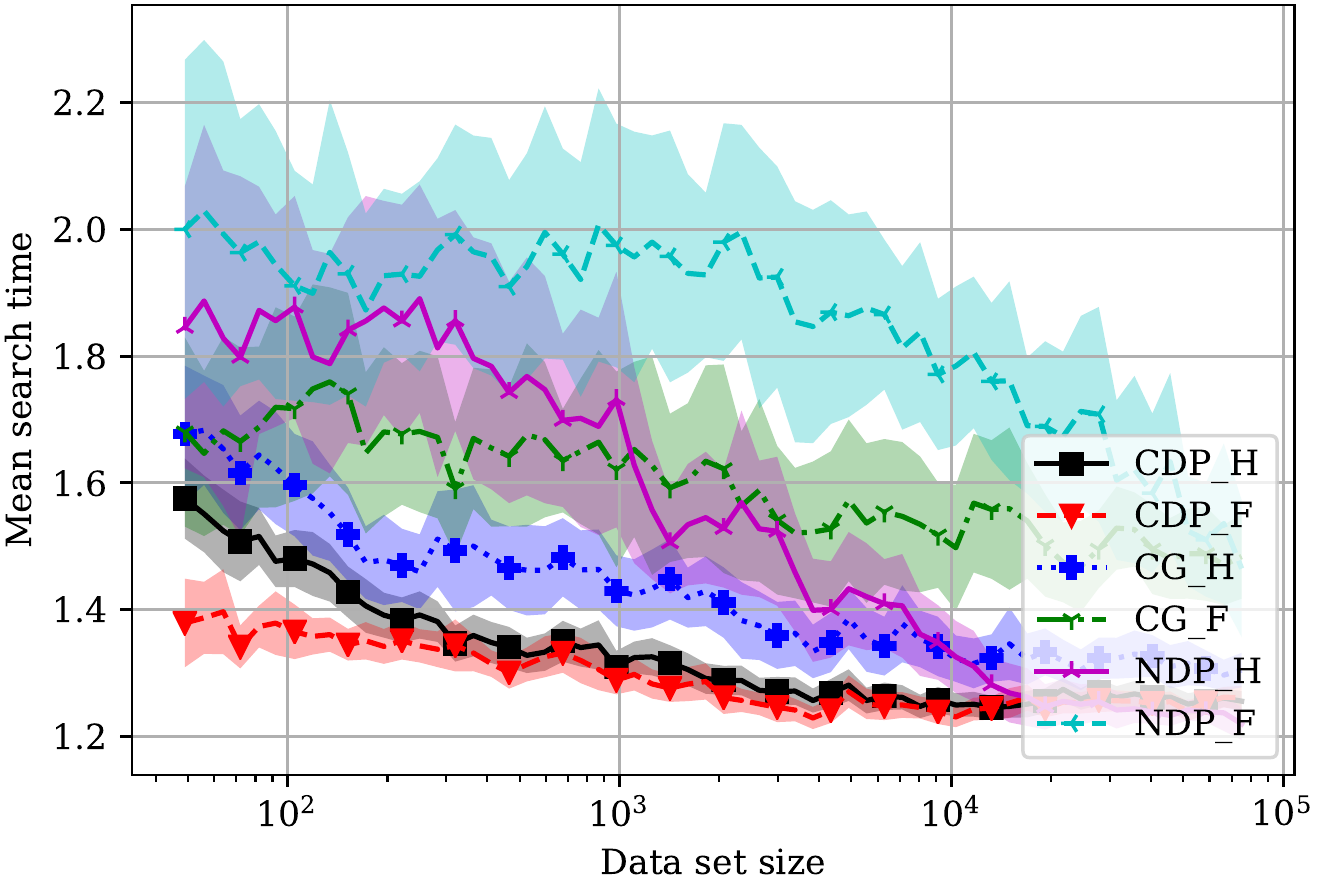}
        \caption{}
        \label{fig:mean_search_time_vs_data_set_size}
    \end{subfigure}
    \caption{Efficacy and time over different sized training sets for the synthetic DGP. Interval widths represent the variance across 50 realizations. }
    \label{fig:data_set_size_variance}
\end{figure}

\begin{figure}
    \centering
    \includegraphics[width=0.7\linewidth]{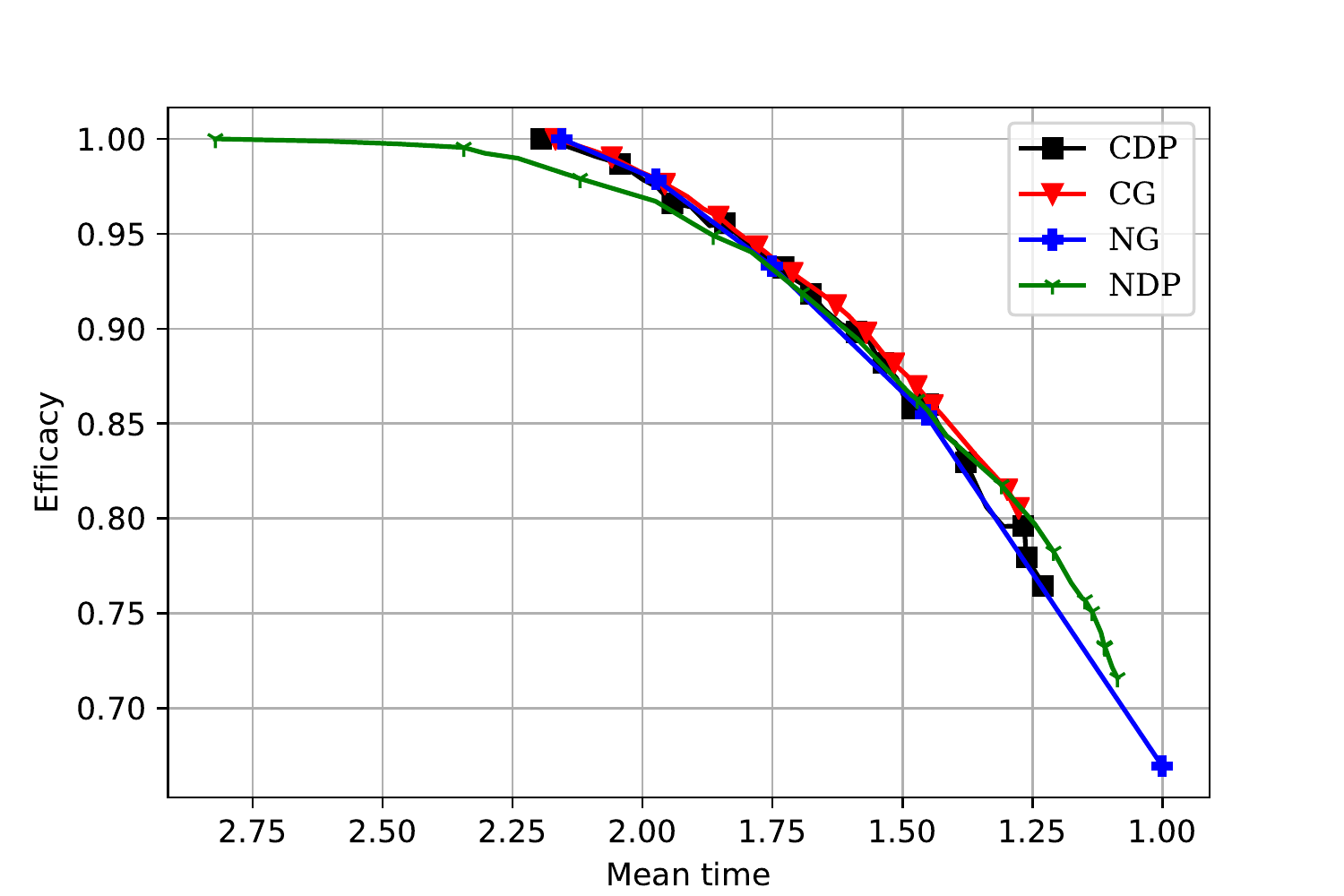}
    \caption{Efficacy and search time (number of trials) for different policy optimization methods operating the same model (historical smoothing, upper bound). }
    \label{fig:deltasweep_tve}
\end{figure}

\begin{figure}
    \centering
    \begin{subfigure}[t]{0.48\textwidth}
        \centering 
        \includegraphics[width=\linewidth]{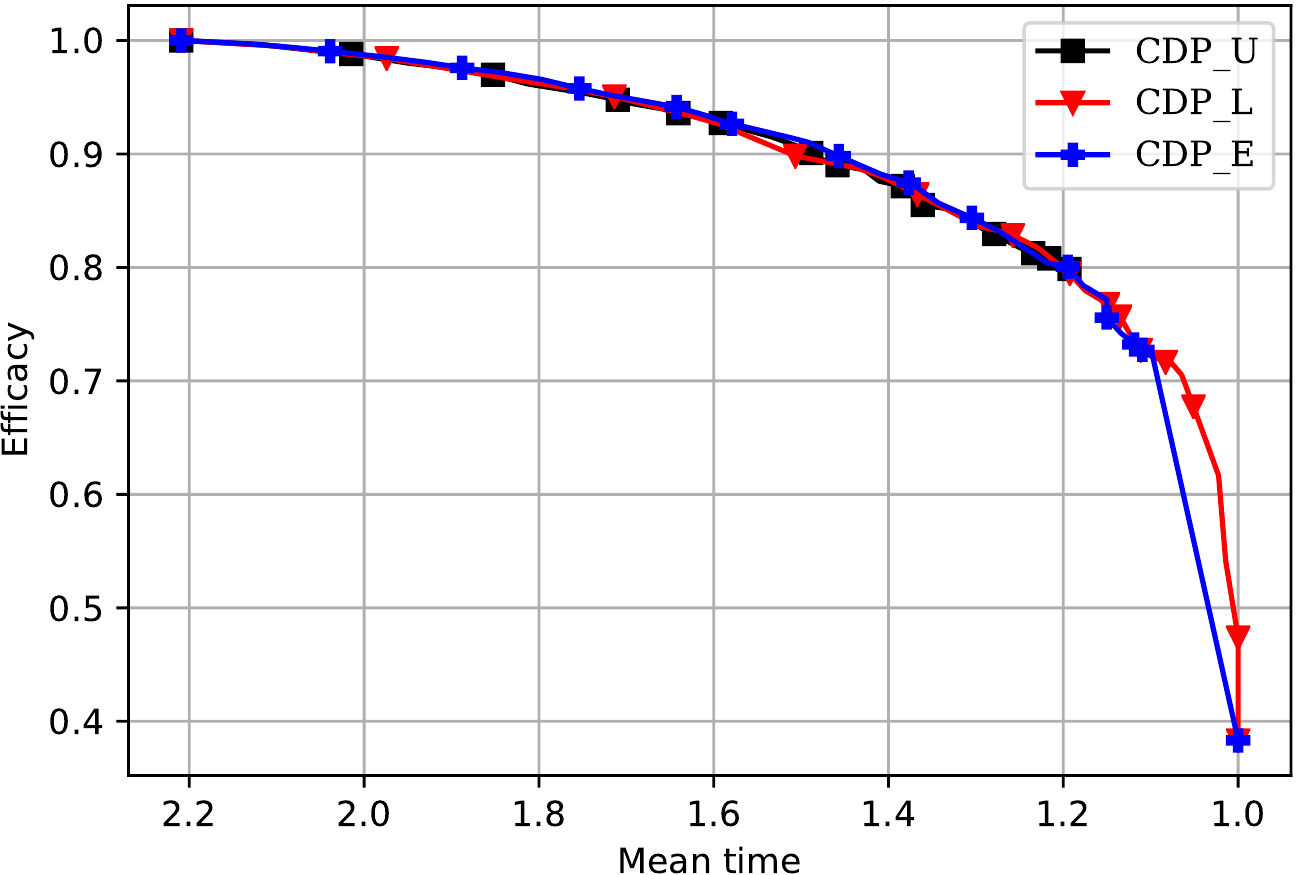}
        \caption{Constrained Dynamic Programming algorithm.}
        \label{fig:cdp_bound_tve}
    \end{subfigure}
    ~
    \begin{subfigure}[t]{0.48\textwidth}
        \centering
        \includegraphics[width=\linewidth]{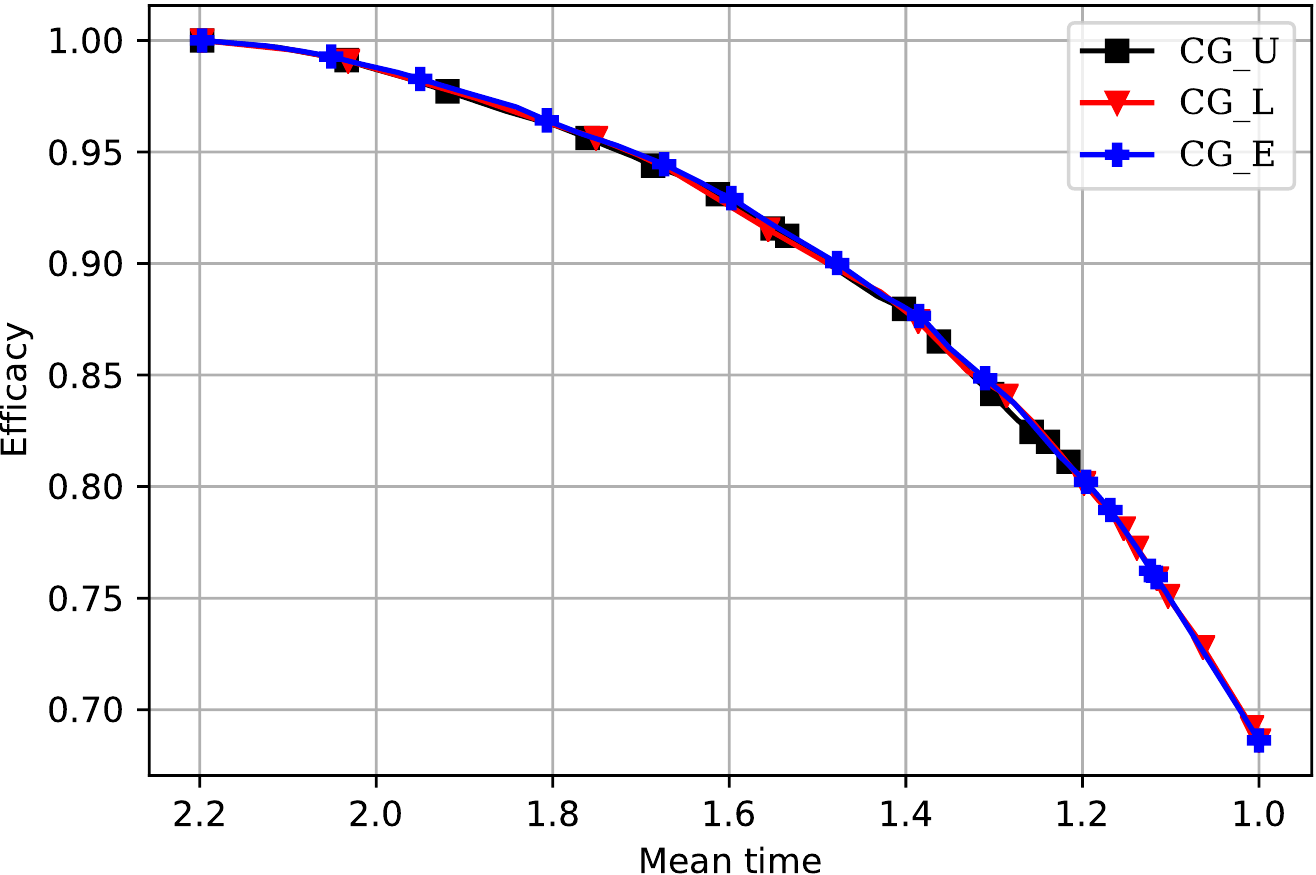}
        \caption{Constrained Greedy algorithm.}
        \label{fig:g_bound_tve}
    \end{subfigure}
    \caption{Results using estimates of the stopping criterion based on the upper (\_U) and lower bounds (\_L) described in Appendix~\ref{app:bound}, as well as the no-bound (exact) estimate (\_E) for the CDP and CG algorithms with $\delta$ varying linearly in $[0,1]$. }
    \label{fig:bounds_comparison}
\end{figure}

\begin{figure}
    \centering
    \begin{subfigure}[t]{0.48\textwidth}
        \centering
        \includegraphics[width=\textwidth]{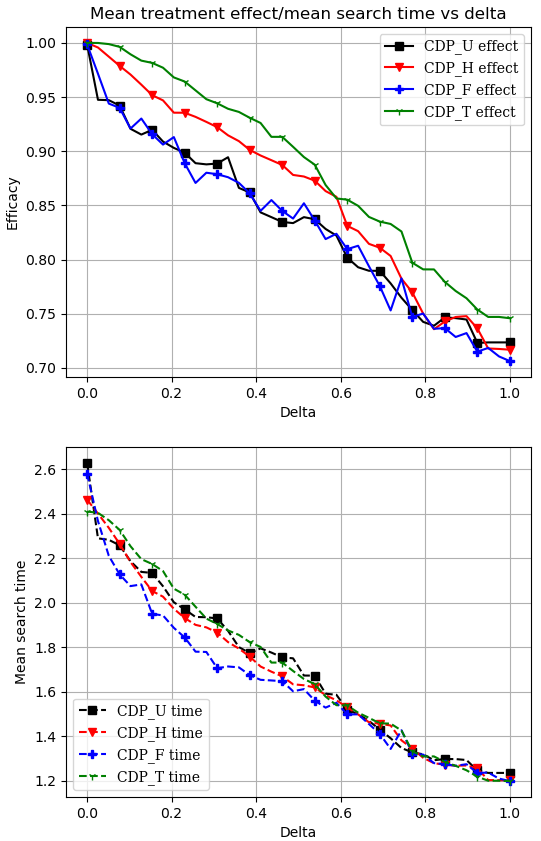}
        \caption{Efficacy and search time (number of trials) for varying approximations used in estimating the stopping criterion, with the upper bound, in the CDP algorithm. \_U stands for using a uniform prior to fill in missing valus. \_H is the historical kernel smoothing described in Appendix~\ref{app:smoothing}. \_F refers to function approximation and \_T the result for using the true model.}
        \label{fig:meaning_of_delta_approximators}
    \end{subfigure}
    ~
    \begin{subfigure}[t]{0.48\textwidth}
        \centering
        \includegraphics[width=\textwidth]{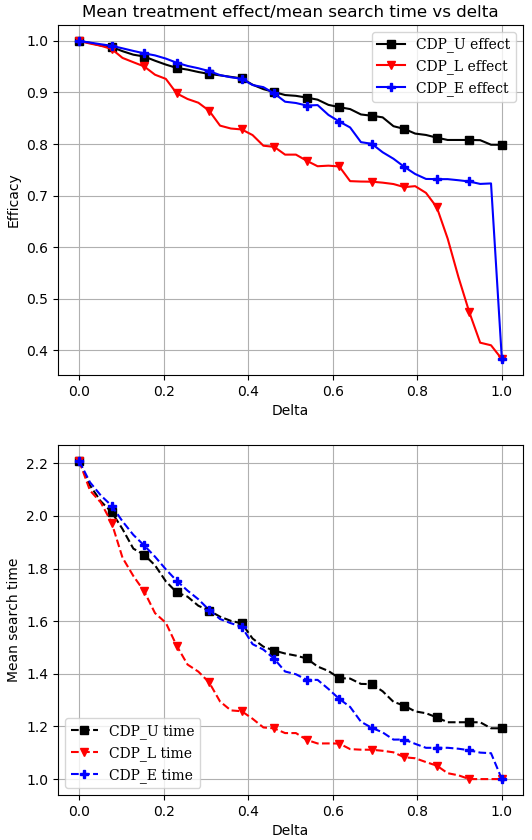}
        \caption{Efficacy and search time (number of trials) when using different bounds on the stopping criterion $\rho$ in the CDP algorithm. \_U stands for using the upper bound, \_L for the lower bound and \_E for the exact (no bound) estimate of $\rho{}$.}
        \label{fig:meaning_of_delta_bounds}
    \end{subfigure}
    \caption{Efficacy and mean search time (number of trials), varying $\delta$ in $[0,1]$.  }
    \label{fig:meaning_of_delta}
\end{figure}

\subsection{Antibiotic resistance dataset}
Below, we give additional information on the antibiotic resistance dataset compiled from MIMIC-III. 

To gather a cohort for which a consistent set of culture tests had all been performed for every patient, the set of organisms were restricted to a small subset. This selection was made based on overall prevalence in the data as well as the co-occurrence with common antibiotic culture tests. The selected organisms and antibiotics are listed below. 

\label{app:antibiotics}
\paragraph{Selected (bacterial) microorganisms:}
\begin{itemize}
\item Escherichia Coli (E. coli)
\item Pseudomonas aeruginosa
\item Klebsiella pneumoniae
\item Proteus mirabilis 
\end{itemize}

\paragraph{Selected antibiotics:}
\begin{itemize}
\item Ceftazidime
\item Piperacillin/Tazo
\item Cefepime
\item Tobramycin
\item Gentamicin
\item Meropenem
\end{itemize}
\textit{pending} was also an ``result'' in MIMIC-III, there were few of these instances and they were removed. 
Covariates X: Ages are divided into the four groups $[0,15]$, $(15,31]$, $(31,60]$, and $(60, \infty)$.
The two diseases are \textit{Infectious And Parasitic Diseases} and \textit{Diseases Of The Skin And Subcutaneous Tissue} as classified by ICD \citep{worldhealthorganization1978}.
The data was split in training and test 70/30 from 1362 patients and patients with multiple organisms were not split between the sets. Patients who had taken any antibiotic other than our chosen ones were not included in the data. Figure \ref{fig:antibiotics_delta_variance_app} uses the same data as Figure \ref{fig:func_vs_freq} but is split by $\delta$ and variance is shown.

\begin{tabular}{c|c}
    \# of treatments & \# of patients \\ \hline
    1 & 860 \\
    2 & 340 \\
    3 & 137 \\
    4 & 22 \\
    5 & 3 \\
\end{tabular}
\begin{figure}
    \begin{subfigure}{0.48\textwidth}
        \includegraphics[width=\textwidth]{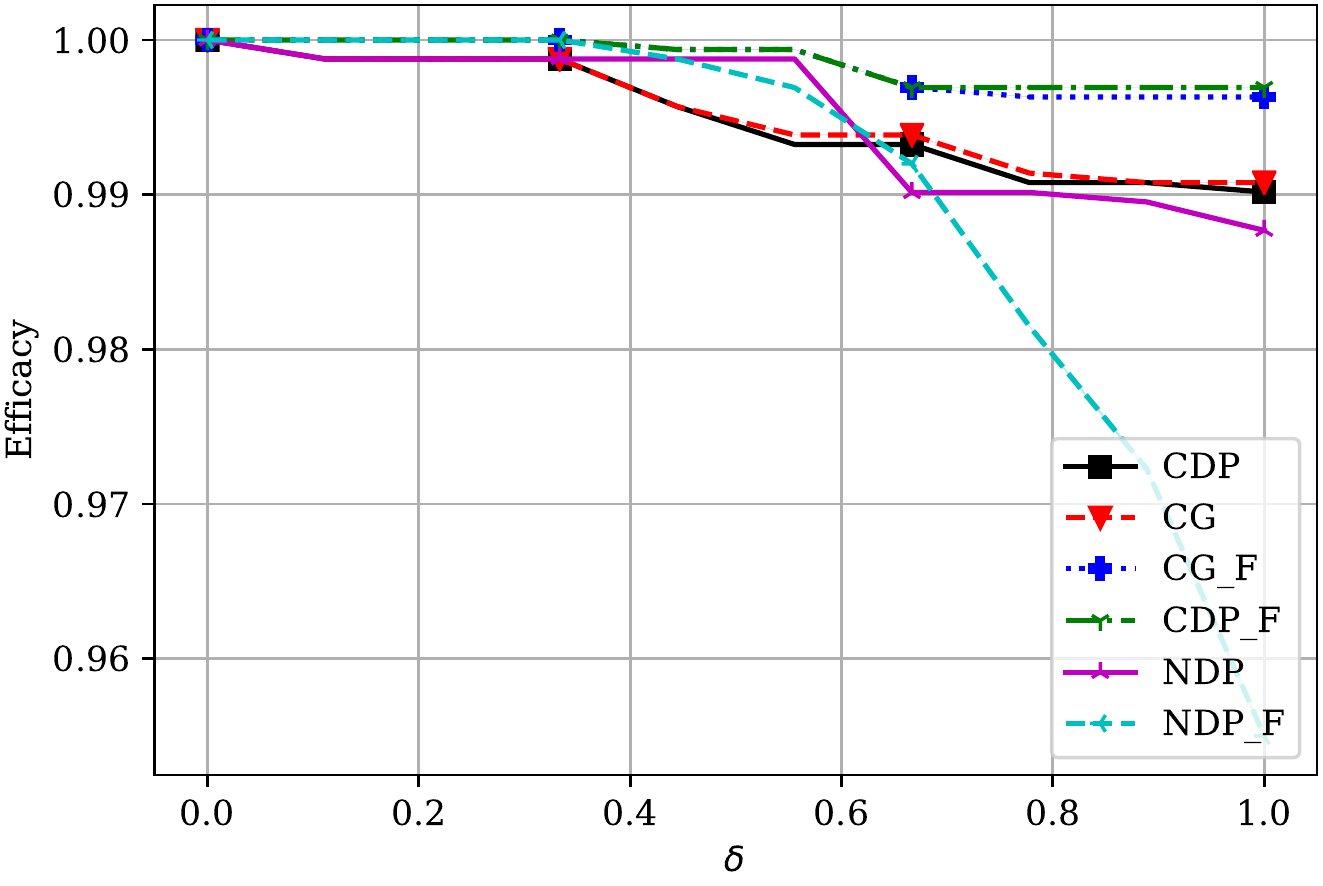}
        \caption{}
        \label{fig:antibiotics_efficacy_vs_delta_app}
    \end{subfigure}
    \begin{subfigure}{0.48\textwidth}
\includegraphics[width=\textwidth]{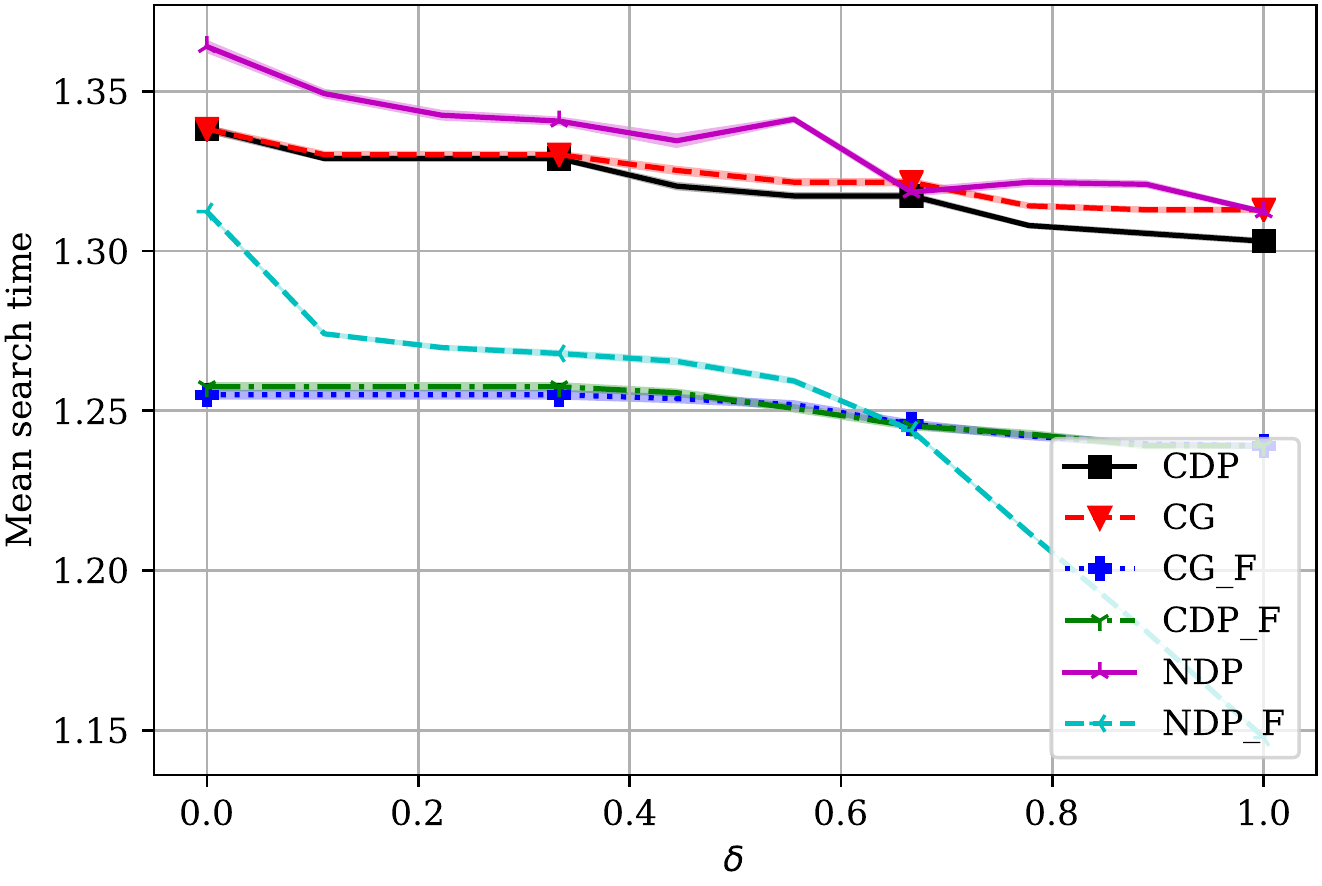}
        \caption{}
        \label{fig:antibiotics_mean_search_time_vs_delta_app}
    \end{subfigure}
    \caption{Efficacy and mean search time over different values of $\delta$ on the antibiotic resistance data set. The width of the plots represent the unbiased empirical sample variance across random splits.}
    \label{fig:antibiotics_delta_variance_app}
\end{figure}

\begin{figure}
    \begin{subfigure}{0.48\textwidth}
        \includegraphics[width=\textwidth]{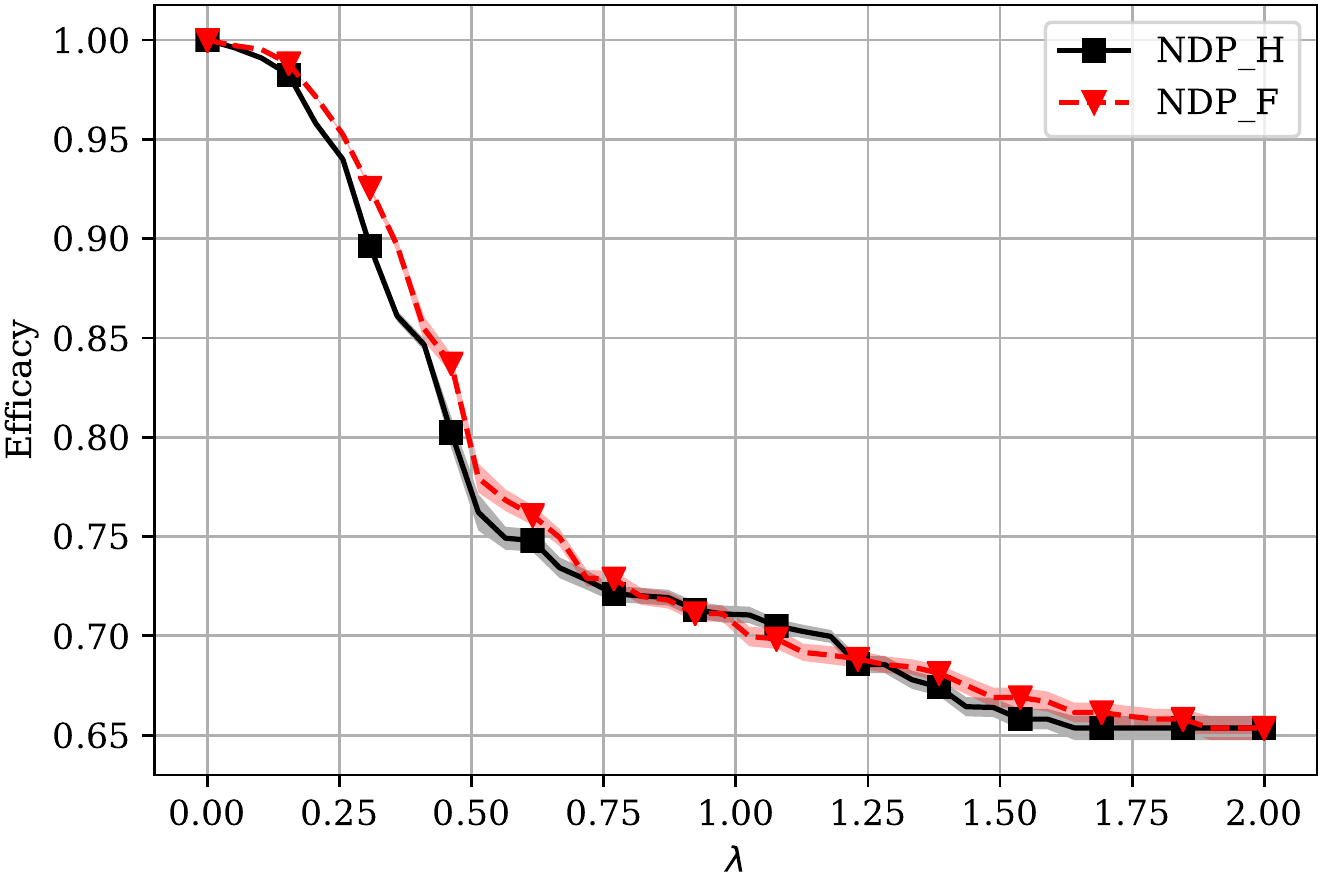}
        \caption{}
        \label{fig:antibiotics_efficacy_vs_lambda_app}
    \end{subfigure}
    \begin{subfigure}{0.48\textwidth}
\includegraphics[width=\textwidth]{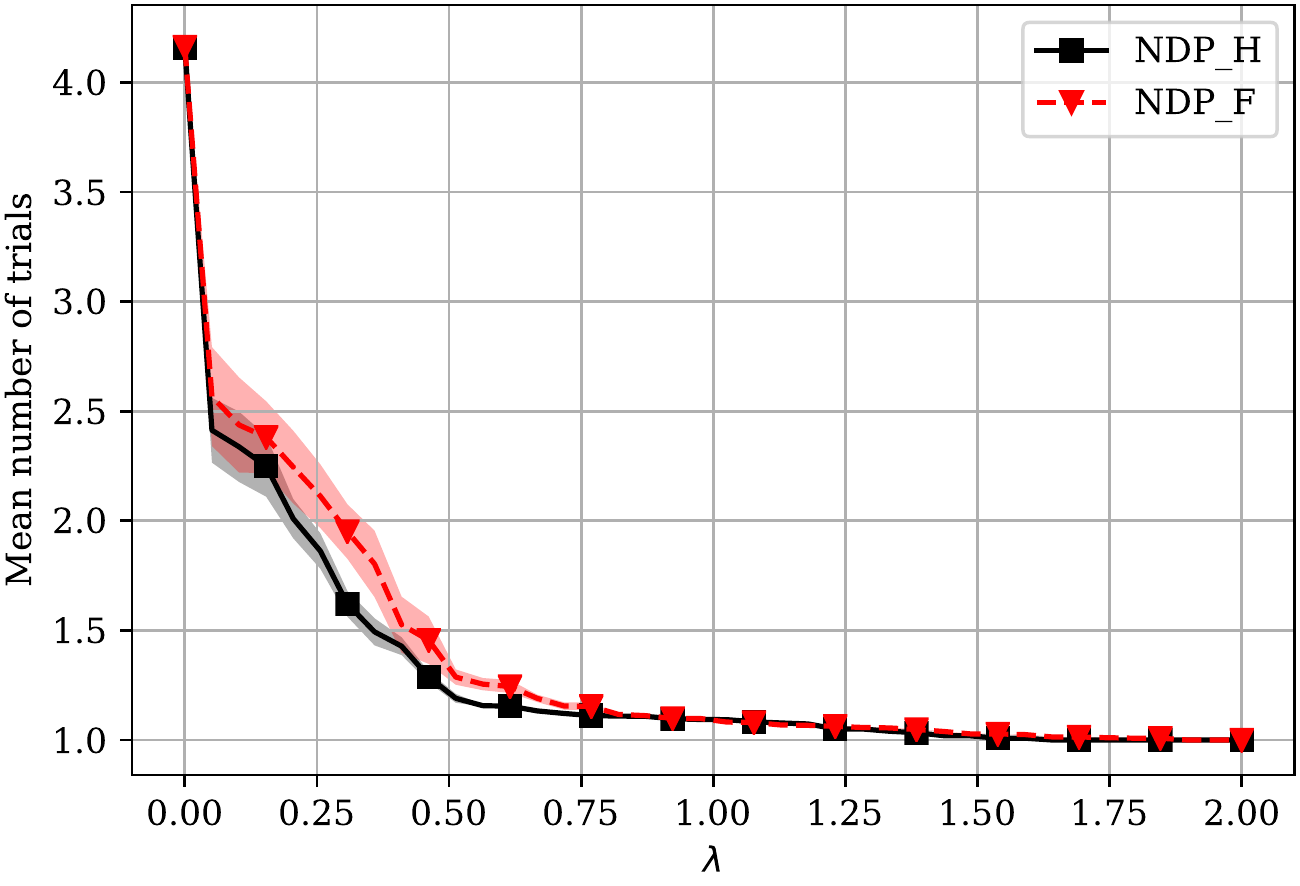}
        \caption{}
        \label{fig:antibiotics_mean_search_time_vs_lambda_app}
    \end{subfigure}
    \caption{Efficacy and mean number of trials over different values of $\lambda$ for the Naive Dynamic Programming algorithm. Variance is unbiased sample variance across random splits of the data. $\lambda$ is perturbed by 0.0001 in order to avoid division by zero for $\lambda = 0$. }
    \label{fig:antibiotics_vs_lambda_app}
\end{figure}

%\bibliography{main}
%\input{output.bbl}

\end{document}